\newtheorem{lemma}{Lemma}
\newtheorem{thm}{Theorem}
\newtheorem{defi}{Definition}
\newtheorem{assumption}{Assumption}
\newtheorem{exam}{Example}
\newcommand{\cmark}{\ding{51}} 
\newcommand{\xmark}{\ding{55}} 
\newcommand{\beqa}{\begin{eqnarray}}
\newcommand{\eeqa}{\end{eqnarray}}
\newcommand{\beqas}{\begin{eqnarray*}}
\newcommand{\eeqas}{\end{eqnarray*}}
\newcommand{\ba}{\begin{array}}
\newcommand{\tr}{\mathrm{Tr}}
\newcommand{\ea}{\end{array}}
\newcommand{\bi}{\begin{itemize}}
\newcommand{\ei}{\end{itemize}}
\newcommand{\PFD}{\mathtt{PFD}}
\newcommand{\pemp}{p_{\mathtt{emp}}}
\newcommand{\pdata}{p_{\mathtt{data}}}
\newcommand{\PFDm}{\mathcal E_\mathtt{mem}}
\newcommand{\PFDg}{\mathcal E_\mathtt{gen}}
\newcommand{\PFDb}{\mathcal E_\mathtt{bias}}
\newcommand{\PFDv}{\mathcal E_\mathtt{var}}
\DeclareMathOperator{\diag}{diag}
\newcommand{\teacher}{\bm \theta_{\mathtt{t}}}
\newcounter{spb}
\def\b0{\bm{0}}
\def\b1{\bm{1}}
\def\ba{\bm{a}}
\def\E{\mathbb{E}}
\def\P{\mathbb{P}}
\title{Understanding Generalization in Diffusion Distillation via Probability Flow Distance}
\newcommand{\corrauth}{\textsuperscript{\ddag}}
\corrauth\textsuperscript{1}
\affiliation{
  \textsuperscript{1}University of Michigan \quad $\cdot$ \quad \textsuperscript{2}University of Macau \quad $\cdot$ \quad \textsuperscript{3}University of Chicago
}
\keywords{Generalization, Diffusion Model, Distillation, Metric, Probability Flow Distance}
\date{\today}
\begin{document}

\makeDeepthinkHeader

\begin{figure}[ht]
\centering
\includegraphics[width=\linewidth]{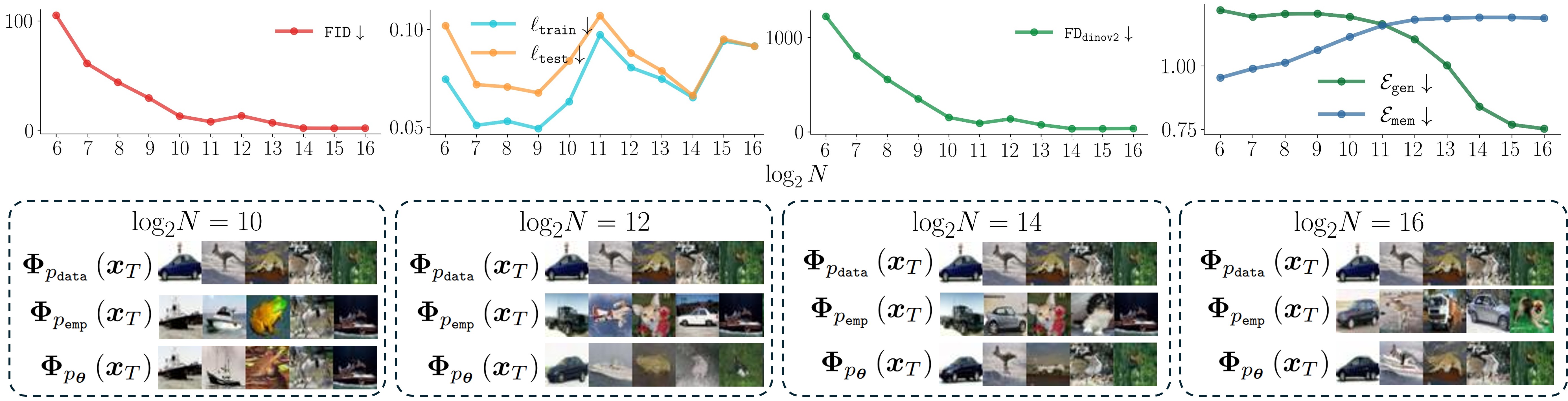}
\caption{\textbf{Comparison of practical metrics on the MtoG transition.} The top figure plots multiple evaluation metrics as functions of $\log_2 N$. The bottom figure visualizes the generation when $N = 2^{10}, 2^{12}, 2^{14}, 2^{16}$, sampled from the $\pdata$ (top row), the $\pemp$ (middle row), and $p_{\bm \theta}$ (bottom row). The same column shared the same initial noise $\bm x_T$.}
\label{fig:metric-comparison-selected}
\end{figure}

\newpage
\tableofcontents

\newpage


\section{Introduction}
\label{sec:intro}



While diffusion models \cite{scoreSDE, FM, RF, ddpm} have revolutionized generative AI, their real-world deployment remains limited by substantial computational cost. Knowledge distillation \cite{hinton2015distilling}, which transfers capabilities from a high-capacity teacher to a significantly smaller student, has therefore become a standard approach for practical diffusion-model deployment. Distillation has shown impressive gains in efficiency through both lightweight student architectures \cite{chen2025snapgen, kim2024bk, hu2026snapgen++} and shorter sampling trajectories \cite{yin2024one, yin2024improved, salimans2022progressive, song2023consistency, meng2023distillation, sauer2024adversarial, xie2024em}. 
Beyond efficiency, recent study shows that distillation can also mitigate memorization \cite{borkar2026memorization}, a known issue in diffusion model training that raises copyright and privacy concerns \cite{gu2025on, carlini2023extracting, somepalli2023diffusion, somepalli2023understanding}. In some cases, diffusion distillation even leads to surprising improvements in generation quality \cite{ma2025diffusion}.

However, despite the remarkable performance of distilled models, a comprehensive understanding of their generalization ability and working mechanisms remains elusive. Specifically, existing metrics for evaluating the generalizability of diffusion distillation suffer from significant shortcomings: common empirical metrics like Fréchet inception distance ($\mathtt{FID}$) \cite{heusel2017gans}, Inception Score ($\mathtt{IS}$) \cite{salimans2016improved} focus on generation quality, but they cannot distinguish between memorization and generalization, where both can yield high-quality outputs. Neural Network Divergence ($\mathtt{NND}$) \cite{arora2017generalization, gulrajani2018towards} proposed to measure the generalizability for generative adversarial networks (GANs) \cite{goodfellow2014generative}. However, it requires a large amount of data for evaluation and is not suitable for diffusion distillation. Although recent works measure generalization by evaluating the likelihood of generated samples that are copied from the training data \cite{zhang2023emergence, yoon2023diffusion}, this can be misleading, as pure noise may be misclassified as generalized output. 

On the other hand, other approaches aim to measure generalization by comparing the distance between the student distribution and the teacher distribution. While distributional distance such as Kullback-Leibler divergence ($\mathtt{KL}$) \cite{chen2023restoration, nie2024the, li2023generalization}, total variation ($\mathtt{TV}$) \cite{chen2023the, li2024towards, li2024accelerating, yang2024leveraging}, and 2-Wasserstein distance ($W_2$) \cite{gao2025convergence, debortoli2022convergence, chen2023improved, Gao2023WassersteinCG} are theoretically appealing, they are often computationally expensive and thus impractical for diffusion distillation. (We defer a detailed discussion in \Cref{sec:compare_theoretical_metric}.)
As summarized in \Cref{tab:measure_comparison}, existing metrics are not both accurate and efficient for evaluating diffusion distillation in practice, highlighting the need for a generalization metric that is both theoretically grounded and practically tractable.

Moreover, developing a principled evaluation framework within the distillation setting is imperative. Such a framework is crucial not only for deepening our insight into the mechanisms underlying diffusion distillation but also for providing systematic guidance in designing more effective architectures, training strategies, and practical benchmarks.

\noindent \textbf{Our Contribution.} 
In this work, we propose a novel metric,  termed the probability flow distance ($\PFD$), that can faithfully evaluate the generalization ability of distilled diffusion models. Specifically, our $\PFD$ metric quantifies distributional differences by leveraging the backward probability flow ODE (PF-ODE) \cite{scoreSDE}, which is widely used in the sampling process of diffusion models. 
Unlike practical metrics such as $\mathtt{FID}$, which cannot distinguish between memorization and generalization, our $\PFD$ provides a theoretically grounded measure of distance between distributions, offering a more reliable assessment of generalization. 
Compared to theoretical metrics like the Wasserstein distance, $\PFD$ is computationally more efficient by leveraging the benign properties of PF-ODE. 
Moreover, by leveraging $\PFD$, our analysis reveals several intriguing generalization phenomena that offer new insights into the learning behavior of diffusion distillation, which could potentially be informative for training diffusion models in general.

\begin{itemize}[leftmargin=*]
    \item \textbf{Scaling behavior from memorization to generalization.} Our metric quantitatively characterizes the scaling behavior of diffusion distillation in the transition from memorization to generalization. Specifically, we demonstrate that generalization follows a universal scaling behavior governed by $N / \sqrt{|\bm \theta|}$, where $N$ is the training dataset size and $|\bm \theta|$ is the number of model parameters. In contrast, prior studies \cite{zhang2023emergence, yoon2023diffusion} have only considered the effects of model capacity or dataset size in isolation, without capturing their joint influence on generalization.
     \item \textbf{Understanding generalization in learning dynamics.} Our $\PFD$ metric reveals and reconciles several intriguing generalization phenomena within the learning dynamics. Specifically, under regimes of sufficient training data, we identify an ``\textbf{epochwise double descent}'' behavior in diffusion distillation, where the generalization error initially decreases, subsequently increases, and finally decreases again toward convergence. While similar phenomena have been observed in overparameterized supervised models, we provide the first empirical validation of this behavior in the context of diffusion distillation. Conversely, when training data is limited, we observe early learning, where models initially generalize but later transition to memorization until convergence.

    \item \textbf{The bias-variance trade-off of generalization errors.} Finally, we show that our $\PFD$ metric naturally introduces a bias–variance decomposition of the generalization error, extending classical statistical learning theory to diffusion models. Empirically, we observe a trade-off consistent with supervised learning: increasing model capacity reduces bias but increases variance, yielding a characteristic U-shaped generalization error curve. This finding could potentially help us identify overfitting and memorization in the distillation and training of diffusion models.
\end{itemize}
\section{Introduction of Probability Flow Distance}\label{sec:pf-distance}

In this section, we propose a new metric called probability flow distance ($\PFD$), which is designed to quantify the distance between two arbitrary probability distributions. 
The design of $\PFD$ is motivated by the PF-ODE, which we first review in \Cref{subsec:PF-ODE-review}. 
We then formally define $\PFD$ in \Cref{subsec:def-pfd} and present its empirical estimation with theoretical guarantees in \Cref{subsec:measure-pfd}. 

\subsection{The Mapping Induced by PF-ODE}\label{subsec:PF-ODE-review}

In general, PF-ODE \cite{scoreSDE} is a class of ordinary differential equations (ODE) that aims to reverse a forward process, where Gaussian noise is progressively added to samples drawn from an underlying distribution, denoted as $p_{\texttt{data}}$. The forward process and the PF-ODE can be described as follows:
\begin{itemize}[leftmargin=*]
    \item \emph{Forward process.} Given a sample $\bm x_0 \overset{i.i.d.}{\sim} p_{\texttt{data}}(\bm x)$, the forward process progressively corrupts it by adding Gaussian noise. This process can be characterized by the stochastic differential equation (SDE) $\mathrm{d}\bm x_t =  f(t) \bm x_t \mathrm{d}t + g(t) \mathrm{d} \bm{w}_t,$
, where  $t \in [0, T]$ is the time index, $\{\bm{w}_t\}_{t \in [0, T]} $ is a standard Wiener process, and $f(t), g(t): \mathbb{R}_+ \to \mathbb{R}$ are the drift and diffusion functions that control the noise schedule. In this work, we adopt the noise schedule proposed by elucidated diffusion models (EDM) \cite{EDM}, where $f(t) = 0$ and $g(t) = \sqrt{2 t}$. Substituting this into the SDE and integrating both sides, we obtain $\bm x_t =  \bm x_0 + \int_{0}^{t} \sqrt{2 \tau} \mathrm{d} \bm{w}_\tau.$
For ease of exposition, we use $p_{t}(\bm{x}_t)$ to denote the distribution of the noisy image $\bm x_t$ for each $t\in[0,T]$. In particular, it is worth noting that $p_0(\bm x) = p_{\texttt{data}}(\bm x)$ and $p_T(\bm x) \to \mathcal{N}(\bm 0, T^2\bm I)$ as $T \to +\infty$.  

    \item \emph{Probability flow ODE.} According to \cite{scoreSDE}, the PF-ODE can transform a noise sample $\bm x_{T}$ back into a clean data sample $\bm x_0$. Specifically, under the EDM noise scheduler, the PF-ODE admits the following form: 
 \begin{align}\label{eq:reve_ode_em}
     \mathrm{d}\bm{x}_t =  - t\nabla \log p_{t}(\bm{x}_t) \mathrm{d}t,
 \end{align} 
 where $\nabla_{\bm{x}} \log p_t(\bm{x}_t)$ (or simply $\nabla \log p_t(\bm{x}_t)$) denotes the \emph{score function} of the distribution $p_t(\bm{x}_t)$ at time $t\in[0,T]$. According to \cite{scoreSDE}, the backward PF-ODE \eqref{eq:reve_ode_em} and the forward SDE have the same distribution at each timestep $t$. In practice, since the score function $\log p_{t}(\bm{x}_t)$ is unknown, in diffusion models we approximate it using a neural network $\bm s_{\bm \theta}(\bm x_t, t)$ and employ a numerical solver to generate samples from \Cref{eq:reve_ode_em}. Additional details are provided in \Cref{app:train_diffusion_model}.
\end{itemize}
 

The backward PF-ODE induces a \emph{unique} mapping $\bm \Phi_{p_{\texttt{data}}}$ from $\bm x_T$ to $\bm x_0$. Integrating both sides of \eqref{eq:reve_ode_em} from $T$ to $0$, the mapping $\bm \Phi_{p_{\texttt{data}}}$ can be expressed as:
 \begin{align}\label{eq:n2i_integral}
    \bm \Phi_{p_{\texttt{data}}}(\bm x_T) \coloneqq \bm x_T - \int_{T}^{0}t\nabla \log p_{t}(\bm{x}_t) \mathrm{d}t. 
\end{align}
Previous work \cite{scoreSDE} shows that, when $\bm{x}_T \sim \mathcal{N}(\bm{0}, T^2 \bm{I})$ and $T \to +\infty$, the random variable $\bm{\Phi}_{p_{\texttt{data}}}(\bm{x}_T)$ is distributed according to $p_{\texttt{data}}(\bm{x})$. Consequently, if the data distribution $p_{\texttt{data}}$ is known, the score function $\nabla \log p_t(\bm{x}_t)$ is explicitly available, and the backward PF-ODE induces a deterministic and unique mapping from the Gaussian distribution to $p_{\texttt{data}}$ (we formally prove uniqueness in the identity property of our \Cref{thm:properties}). 

\subsection{Definition of Probability Flow Distance}\label{subsec:def-pfd}

Based on the above setup, we define a metric to measure the distance between any two distributions as follows.
\begin{defi}[Probability flow distance ($\PFD$)]\label{def:pfd}
For any two given distributions $p$ and $q$ of the same dimension, we define their distribution distance as
\begin{equation}\label{eq:pfd}
\scalebox{1}{$
    \PFD\left(p, q\right) \coloneqq
    \left( \mathbb  E_{\bm x_T} \left[\left\| \bm \Psi \circ \bm \Phi_{p} \left(\bm x_T\right) - \bm \Psi \circ \bm \Phi_{q}  \left(\bm x_T\right) \right\|^2_2\right] \right)^{1/2}.
$}
\end{equation}
Here, $\bm x_T \sim \mathcal{N}(0, T^2  \bm I)$,  $\bm{\Phi}_{p}$ and $\bm{\Phi}_{q}$ denote the mappings between the noise and image spaces for distributions $p$ and $q$, respectively, as defined in \eqref{eq:n2i_integral}, and $\bm{\Psi}(\cdot)$ represents an image descriptor.
\end{defi}
Intuitively, when the image descriptor $\bm{\Psi}(\cdot)$ is an \emph{identity mapping}, $\PFD$ measures the distance between two distributions $p$ and $q$ by comparing their respective noise-to-image mappings $\bm{\Phi}_{p}(\cdot)$ and $\bm{\Phi}_{q}(\cdot)$ starting from the same Gaussian noise input $\bm{x}_T$. 
Small values of $\PFD$ indicate that the two distributions generate similar data from identical noise and, therefore, are close to each other. 

However, in practice, measuring distributional distance alone does \emph{not} fully capture the perceptual quality of natural images. To incorporate perceptual quality, we employ an image descriptor $\bm \Psi(\cdot)$, which is typically implemented using a pre-trained neural network such as DINOv2 \cite{dinov2} or Self-Supervised Copy Detection Descriptor (SSCD) \cite{sscd}. Measuring distances in such feature spaces is a common practice in existing metrics for generative models~\cite{heusel2017gans, salimans2016improved, stein2024exposing}, as it tends to align better with human perception~\cite{stein2024exposing}. As shown by our ablation studies in \Cref{app:diff_embedding_feature}, the $\PFD$ metric with an image descriptor effectively captures differences in perceptual quality while models are generalizing, where a smaller quantity indicates better perceptual quality. Therefore, for all experiments in \Cref{sec:gen-eval,sec:gen-behavior}, we use an image descriptor (e.g., SSCD) for $\bm{\Psi}(\cdot)$.

Nonetheless, for simplicity and analytical tractability, we assume the image descriptor $\bm \Psi(\cdot)$ to be an identity mapping in the following theoretical analysis.
Specifically, under \Cref{def:pfd}, we show that $\PFD$ satisfies the axioms of a metric (Definition 2.15 in \cite{rudin2021principles}).

\begin{thm}\label{thm:properties} 
For any two distributions $p$ and $q$, the $\textup{\texttt{PFD}}$ satisfies the following properties: 
\begin{itemize}[leftmargin=*]
\item (Positivity)  $\PFD(p, q) > 0$ for any $p \neq q$.  
\item (Identity Property) $\PFD(p, q) = 0$ if and only if $p = q$.  
\item (Symmetry) $\PFD(p, q) = \PFD(q, p)$.
\item (Triangle Inequality) $\PFD(p, q) \leq \PFD(p, p') + \PFD(p', q)$ for all $p'$. 
\end{itemize} 
\end{thm}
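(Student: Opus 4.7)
The plan is to observe that, under the identity choice of $\bm \Psi$ assumed for the analysis, $\PFD(p, q)$ is exactly the $L^2$ norm of $\bm \Phi_p - \bm \Phi_q$ with respect to the Gaussian measure $\gamma_T \coloneqq \mathcal N(\bm 0, T^2 \bm I_n)$:
\begin{equation*}
\PFD(p, q) = \left\| \bm \Phi_p - \bm \Phi_q \right\|_{L^2(\gamma_T;\, \mathbb R^n)}.
\end{equation*}
Viewing $\PFD$ through this Hilbert-space lens trivializes three of the four properties and isolates the real work into the ``only if'' half of the identity property.

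First, I would verify non-negativity, symmetry, and the triangle inequality by inheritance from $L^2$. Non-negativity is immediate since $\PFD$ is the square root of a non-negative integral. Symmetry follows from $\|\bm u - \bm v\|_2 = \|\bm v - \bm u\|_2$ applied pointwise inside the expectation. The triangle inequality reduces, for any auxiliary distribution $p'$, to Minkowski's inequality in $L^2(\gamma_T;\, \mathbb R^n)$ applied to the decomposition $\bm \Phi_p - \bm \Phi_q = (\bm \Phi_p - \bm \Phi_{p'}) + (\bm \Phi_{p'} - \bm \Phi_q)$.

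Next, for the identity property, the ``if'' direction is the easy one: if $p = q$, the forward marginals $p_t$ and $q_t$ are identical for every $t \in [0, T]$, so the score functions $\nabla \log p_t$ and $\nabla \log q_t$ coincide, the PF-ODE vector fields in \eqref{eq:reve_ode_em} agree, and therefore $\bm \Phi_p(\bm x_T) = \bm \Phi_q(\bm x_T)$ for every $\bm x_T$, giving $\PFD(p, q) = 0$. For the converse, suppose $\PFD(p, q) = 0$; then $\bm \Phi_p = \bm \Phi_q$ on a set of full $\gamma_T$-measure. As reviewed in \Cref{subsec:PF-ODE-review}, the noise-to-image map $\bm \Phi_p$ pushes $\gamma_T$ forward to $p$ (in the large-$T$ idealization already in force), and similarly $\bm \Phi_q$ pushes $\gamma_T$ to $q$. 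Almost-everywhere equality of the two maps forces equality of their pushforwards, yielding $p = q$. Finally, positivity is the contrapositive: if $p \neq q$, then $\PFD(p, q) \neq 0$, and combined with non-negativity this gives $\PFD(p, q) > 0$.

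The main obstacle is justifying the pushforward identity $(\bm \Phi_p)_\# \gamma_T = p$ used in the hard direction of the identity property, since strictly it holds only as $T \to \infty$. I would either invoke the large-$T$ idealization already adopted in \Cref{subsec:PF-ODE-review}, or bypass pushforwards entirely by noting that $\bm \Phi_p$ and $\bm \Phi_q$ are continuous flows and $\gamma_T$ has full support on $\mathbb R^n$, so a.e.~equality upgrades to pointwise equality; the resulting identity of PF-ODE trajectories along all initial conditions then forces $\nabla \log p_t = \nabla \log q_t$ throughout $\mathbb R^n \times (0, T]$, which via the forward convolution structure of \eqref{eq:forw} recovers $p_0 = q_0$.
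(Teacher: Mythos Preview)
Your proposal is correct and follows essentially the same approach as the paper: the paper also derives the triangle inequality via the pointwise $\|\cdot\|_2$ triangle inequality followed by Minkowski's inequality in $L^2$, and handles the ``only if'' direction of the identity property by upgrading a.e.\ equality of $\bm\Phi_p$ and $\bm\Phi_q$ to pointwise equality via continuity and then invoking the pushforward (change-of-variables) identity. Your framing through the $L^2(\gamma_T;\mathbb R^n)$ norm is slightly cleaner, and your explicit discussion of the $T\to\infty$ idealization is more careful than the paper's treatment, but the substance is the same.
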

We defer the proof to \Cref{app:proof}. Note that \Cref{thm:properties} establishes the theoretical validity of $\PFD$ as a metric for measuring the distance between any two probability distributions. 


\subsection{Empirical Estimation of \texorpdfstring{$\PFD$}{PFD}}\label{subsec:measure-pfd}
In practice,  the expectation in \eqref{eq:pfd} is intractable due to the complexity of the underlying distributions. Thus, we approximate the $\PFD$ using finite samples: 
\begin{equation}\label{eqn:empirical-pfd}
\scalebox{1}{$\hat{\PFD}(p, q) = \left(\dfrac{1}{M} \sum\limits_{i = 1}^{M} \left\| \bm \Phi_{p} \left(\bm x^{(i)}_T\right) - \bm \Phi_{q}  \left(\bm x^{(i)}_T\right) \right\|^2_2\right)^{1/2}.
$}
\end{equation} 
Here, $\hat{\PFD}(p, q)$ is the empirical version of $\PFD(p, q)$ computed over $M$ independent samples $\{\bm{x}^{(i)}_T\}_{i=1}^{M} \overset{i.i.d.}{\sim} \mathcal{N}(0, T^2 \bm{I})$ with $T \rightarrow \infty$.

Specifically, our finite-sample approximation relies on two key assumptions: (i) the score functions are smooth at all timesteps, and (ii) the score functions of two distributions remain uniformly close within a bounded region of the input space, which can be described as follows.

\begin{assumption}\label{assump:lip_diff_score}
Let $p$ and $q$  be two distributions with the same dimension, where we assume:

(i) (Lipschitz score functions)
There exists a constant $L > 0$ such that for all $\bm x_1, \bm x_2$ and $t \in [0, T]$, it holds that
\begin{equation}
\scalebox{1}{$
    \left\|\nabla_{\bm x} \log p_{t}(\bm x_1) - \nabla_{\bm x} \log p_{t}(\bm x_2)\right\|_2 \leq L \left\|\bm x_1 - \bm x_2\right\|_2,
$}
\end{equation}
and similarly for \( q_t \).

(ii) (Uniform Closeness) 
For all $t \in [0, T]$, there exists a constant $\epsilon > 0$ such that 
\begin{equation}
    \left\|\nabla_{\bm x} \log p_{t}(\bm x) - \nabla_{\bm x} \log q_{t}(\bm x)\right\|_2 \leq \epsilon. 
\end{equation}
\end{assumption}
The Lipschitz continuity of the score function is a common assumption widely adopted in the theoretical analysis of score functions in diffusion models \cite{block2020generative,lee2022convergence,chen2023sampling,chen2023improved,zhu2023sample,chen2023score}. More recently, this property has been rigorously established under the assumption that the data distribution is a mixture of Gaussians \cite{liang2024unraveling}. The uniform closeness assumption holds when $p$ and $q$ both follow \Cref{assump:lip_diff_score} (\emph{i}) and have support on compact domains, which is often the case for image distributions. Under \Cref{assump:lip_diff_score}, the concentration of the empirical estimate $\hat{\PFD}(p, q)$ to $\PFD(p, q)$ can be characterized as follows.

\begin{thm}\label{thm:empirical_approximation}
Let $p$ and $q$ be two distributions satisfying \Cref{assump:lip_diff_score}, and let $\hat{\PFD}(p, q)$ denote the empirical estimate of $\PFD(p, q)$ using $M$ independent samples, as defined in \eqref{eqn:empirical-pfd}. Then, for any $\gamma > 0$, this empirical estimate satisfies the following bound:
\begin{equation}
\scalebox{1}{$
\left| \hat{\PFD}(p, q) - \PFD(p, q)  \right| \leq \gamma
\text{ whenever}\; M \geq \dfrac{\kappa^4(L, \epsilon)}{2 \gamma^4} \log \dfrac{2}{\eta},
$}
\end{equation}
with probability at least $1 - \eta$. Here, $\kappa(L, \epsilon) \coloneqq \exp\left(\frac{L T_\xi^2}{2}\right) \xi + \frac{\epsilon}{L}\left(\exp\left(\frac{L T_\xi^2}{2}\right) - 1\right)$ is a constant, with a numerical constant $\xi>0$  and a finite timestep $T_\xi$ depending only on $\xi$.
\end{thm}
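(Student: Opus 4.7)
My plan is to reduce the theorem to the composition of two ingredients: (i) a deterministic per-sample bound showing that $\|\bm\Phi_p(\bm x_T) - \bm\Phi_q(\bm x_T)\|_2 \leq \kappa(L,\epsilon)$ almost surely, and (ii) a Hoeffding-type concentration inequality that upgrades this uniform bound into control on $|\hat{\PFD}(p,q) - \PFD(p,q)|$. The quartic scaling $\gamma^4$ in the sample-complexity bound is already a tell that concentration should be applied first to the squared quantities $\hat{\PFD}^2$ (whose summands are i.i.d.\ and bounded by $\kappa^2$), and then converted back to $\hat{\PFD}$ via an elementary inequality.

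\textbf{Step 1: per-sample trajectory bound via Gr\"onwall.} To make the backward integral in \eqref{eq:n2i_integral} well defined as $T\to\infty$, I would truncate the PF-ODE at a finite time $T_\xi$ chosen large enough that both $p_{T_\xi}$ and $q_{T_\xi}$ lie within a $\xi$-neighborhood of the Gaussian $\mathcal{N}(\bm 0, T_\xi^2\bm I_n)$; this is where the parameters $\xi$ and $T_\xi$ enter. Running the two PF-ODEs from the shared initial condition $\bm x_T \sim \mathcal{N}(\bm 0, T^2\bm I_n)$ down to $T_\xi$ produces states $\bm x_{T_\xi}^p, \bm x_{T_\xi}^q$ whose difference is controlled by $\xi$, because on this outer interval both score functions are nearly the linear Gaussian score $-\bm x_t/t^2$ and the two flows execute essentially the same contraction. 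From $T_\xi$ downward, the trajectory difference evolves as
\begin{equation*}
\frac{d}{dt}(\bm x_t^p - \bm x_t^q) = -t\bigl[\nabla\log p_t(\bm x_t^p) - \nabla\log p_t(\bm x_t^q)\bigr] - t\bigl[\nabla\log p_t(\bm x_t^q) - \nabla\log q_t(\bm x_t^q)\bigr],
\end{equation*}
whose two bracketed terms \Cref{assump:lip_diff_score} bounds in norm by $tL\|\bm x_t^p - \bm x_t^q\|_2$ and $t\epsilon$, respectively. Applying Gr\"onwall's inequality on $[0,T_\xi]$ (the time-weight $t$ is what produces the quadratic exponent $LT_\xi^2/2$) delivers the closed-form estimate $\|\bm x_0^p - \bm x_0^q\|_2 \leq \exp(LT_\xi^2/2)\,\xi + (\epsilon/L)\bigl[\exp(LT_\xi^2/2) - 1\bigr] = \kappa(L,\epsilon)$.

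\textbf{Step 2: concentration and squared-to-linear conversion.} With the uniform bound in hand, the variables $Z_i \coloneqq \|\bm\Phi_p(\bm x_T^{(i)}) - \bm\Phi_q(\bm x_T^{(i)})\|_2^2$ are i.i.d., non-negative, bounded by $\kappa^2(L,\epsilon)$, and have mean $\PFD^2(p,q)$, so Hoeffding's inequality gives $\mathbb{P}\bigl(|\hat{\PFD}^2(p,q) - \PFD^2(p,q)| \geq \gamma^2\bigr) \leq 2\exp\!\bigl(-2M\gamma^4/\kappa^4(L,\epsilon)\bigr)$. Setting the right-hand side equal to $\eta$ and solving for $M$ recovers exactly the claimed sample complexity $M \geq \kappa^4/(2\gamma^4)\cdot\log(2/\eta)$. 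To pass from $\hat{\PFD}^2$ to $\hat{\PFD}$, I invoke the elementary inequality $(a-b)^2 \leq |a^2 - b^2|$ valid for $a,b\geq 0$, which shows $|\hat{\PFD}^2 - \PFD^2| \leq \gamma^2$ implies $|\hat{\PFD} - \PFD| \leq \gamma$, closing the argument.

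\textbf{Main obstacle.} The delicate part is the tail analysis on $[T_\xi, T]$ in Step 1: I must justify that initializing both PF-ODEs at the same Gaussian sample at $T\to\infty$ produces trajectories that are $\xi$-close by time $T_\xi$, despite the score functions for very noisy data being inaccessible by direct calculation. The anticipated argument leverages the fact that for large $t$ the score $\nabla\log p_t(\bm x)$ is dominated by the Gaussian kernel $-\bm x/t^2$ up to corrections that vanish as $t\to\infty$ (since any bounded-support data distribution becomes invisible after heavy noising), so both trajectories execute almost identical linear contractions and $T_\xi$ can be chosen depending only on $\xi$. Once this tail estimate is formalized, the Gr\"onwall and Hoeffding steps above are essentially routine.
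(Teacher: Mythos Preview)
Your proposal is correct and follows essentially the same route as the paper's proof: establish the per-sample bound $\|\bm\Phi_p(\bm x_T)-\bm\Phi_q(\bm x_T)\|_2\le\kappa(L,\epsilon)$ via a tail truncation at $T_\xi$ followed by Gr\"onwall on $[0,T_\xi]$, then apply Hoeffding to the squared deviations and convert back through $|\sqrt{a}-\sqrt{b}|\le\sqrt{|a-b|}$ (equivalently your $(a-b)^2\le|a^2-b^2|$). Your identification of the tail analysis on $[T_\xi,T]$ as the delicate step is accurate; the paper handles it by the same limit argument (both flows start from the common $\bm x_T$, so $\bm\phi_T=\bm 0$, and one picks $T_\xi$ so that $\|\bm\phi_t\|_2\le\xi$ for $t\ge T_\xi$), with your Gaussian-score-dominance heuristic being a reasonable fleshing-out of that step.
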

We defer the proof to \Cref{app:proof}. Given the score functions of both distributions are smooth and uniformly close, our result in \Cref{thm:empirical_approximation} guarantees that $\PFD(p, q)$ can be approximated to arbitrary precision by its empirical estimate $\hat{\PFD}(p, q)$ with high probability, given a finite number of samples.

\begin{table}[h]
\centering
\caption{\textbf{Metrics comparison.}  }
\begin{tabular}{lcc}
\toprule
 & Efficient & Accurate \\
\midrule
\textit{Theoretical distances} \\
Density-based ($\mathtt{KL}$, $\mathtt{TV}$)  & \xmark & \cmark \\
Sample-based ($W_2$, $\mathtt{MMD}$) & \xmark  & \cmark \\
\midrule
\textit{Practical metrics} \\
$\mathtt{FID}$, $\mathtt{IS}$, $\mathtt{NND}$, \emph{etc.} & \cmark & \xmark \\
\midrule
$\PFD$ (\textbf{Ours}) & \cmark & \cmark \\
\bottomrule
\end{tabular}%
\label{tab:measure_comparison}
\end{table}

\subsection{Advantages of \texorpdfstring{$\PFD$}{PFD} over Existing Metrics.}

\label{sec:compare_theoretical_metric}
As summarized in \Cref{tab:measure_comparison}, we conclude this section by highlighting the advantages of the proposed $\PFD$ over commonly used theoretical metrics for measuring distributional distances, including both density-based and sample-based methods. A comparison with practical evaluation metrics is presented at the end of \Cref{sec:gen-eval}.

\noindent \textbf{Sampling efficiency.} We compare the sampling efficiency of $\PFD$, $\mathtt{FID}$, and $W_2$ on Gaussian distributions, as shown in \Cref{fig:compare_sample_efficiency}, using the experimental settings described in \Cref{app:compare_synthetic_metric}. With the same number of estimated samples ($M = 4096$), $\PFD$ attains a relative error of approximately $4 \times 10^{-3}$, whereas $\mathtt{FID}$ and $W_2$ achieve only about $2 \times 10^{-2}$. Consequently, to reach the same level of relative error, $\PFD$ requires substantially fewer samples. 

\noindent \textbf{Computational efficiency.} Even with the same number of estimated samples $M$, $\PFD$ requires less computation time than other methods:
\begin{itemize}[leftmargin=*]
    \item \textbf{Comparison with sample-based distances.} Sample-based distances such as the Wasserstein distance and Maximum Mean Discrepancy ($\mathtt{MMD}$) incur $O(M^2)$ computational complexity. In contrast, $\PFD$ requires only $O(M)$ computation, making it substantially more efficient.
    \item \textbf{Comparison with density-based distances.} Density-based distances, such as KL divergence, total variation distance, and Jensen–Shannon divergence, require approximating probability densities through computationally intensive methods like the Skilling-Hutchinson trace estimator \cite{scoreSDE, skilling1989eigenvalues, hutchinson1989stochastic}, which become prohibitively expensive in high-dimensional settings for evaluating diffusion models. In contrast, $\PFD$ directly estimates the distributional distance using the score function inherently learned by diffusion models. Furthermore, density-based metrics are theoretically ill-suited for image data, as probability densities remain undefined outside the image manifold \cite{loaiza-ganem2024deep}.
\end{itemize}

\section{Quantifying Generalization Errors via \texorpdfstring{$\PFD$}{PFD}} \label{sec:gen-eval}

In this section, we leverage the $\PFD$ metric in \Cref{sec:pf-distance} to rigorously define and evaluate the generalization error of diffusion models under distillation settings. Specifically, this metric enables us to distinguish between memorization and generalization behaviors, as well as analyze the transition from memorization to generation (MtoG).

This MtoG transition has been explored in recent studies for training diffusion models \cite{yoon2023diffusion,zhang2023emergence,kadkhodaie2023generalization, bonnaire2025why}, which highlight two learning regimes of diffusion models depending on dataset size and model capacity: (\emph{i}) \textbf{Memorization regime:} Large models trained on small datasets memorize the empirical distribution $\pemp(\bm x)$ of the training data, yielding poor generalization and no novel samples. (\emph{ii}) \textbf{Generalization regime:} For fixed model capacity, as the number of training samples increases, the model transitions into generalization, approximating the true data distribution $p_{\mathtt{data}}(\bm{x})$ and generating new samples.

However, existing approaches \cite{yoon2023diffusion,zhang2023emergence,alaa2022faithful} quantify generalization solely by measuring the distance from a generated sample to its nearest neighbor in the training dataset. While effective for identifying memorization, these sample-level measures are inadequate for capturing true generalization because they do not quantify distributional distance. Simply showing that an output is distinct from training data is insufficient; for instance, such metrics could erroneously classify a sample of pure noise as a valid generated sample. To address these limitations, we leverage the $\PFD$ metric to assess generalization at the distributional level. Specifically, we quantify how closely the distribution learned via diffusion models, $p_{\bm \theta}$, approximates the underlying distribution $p_{\mathtt{data}}(\bm x)$ and how closely it aligns with the empirical distribution $\pemp(\bm x)$. Based on this approach, we formally define generalization and memorization errors as follows.


\begin{defi}[Generalization and Memorization Errors]\label{def:PFD_memorization_generalization}
Consider a diffusion model $\bm s_{\bm \theta}$ trained on a finite dataset $\mathcal{D} = \{\bm y^{(i)}\}_{i=1}^N$, where each sample $\bm y^{(i)}$ is drawn i.i.d. from the underlying distribution $p_{\mathtt{data}}(\bm x)$. Denote the learned distribution induced by a diffusion model $\bm s_{\bm \theta}$ as $p_{\bm \theta}(\bm x)$. Using the $\PFD$ metric, we can formally define the generalization and memorization errors as follows:
\begin{align}\label{eqn:gen-error}
\scalebox{.9}{$
   \PFDg\left(\bm \theta\right) \coloneqq \PFD\left( p_{\bm \theta}, p_{\mathtt{data}}\right),\quad \PFDm\left(\bm \theta\right) \coloneqq \PFD\left(p_{\bm \theta},p_{\mathtt{emp}}\right),
$}
\end{align}
where the empirical distribution $\pemp(\bm x) = \frac{1}{N}\sum_{i=1}^N \delta(\bm x - \bm y^{(i)})$, with $\delta(\cdot)$ denoting the Dirac delta function.
\end{defi}
Here, given access to $\pemp(\bm x)$, the memorization error $\PFDm(\bm \theta)$ can be exactly computed (see \Cref{app:estimate_m_score}). We further show that $\PFDm(\bm \theta)$ coincides with metrics introduced in \cite{yoon2023diffusion, zhang2023emergence}. 


\noindent \textbf{Evaluation protocol for generalization.} Under the diffusion distillation setting (see \Cref{sec:gen-behavior}), we study generalization behaviors of a student model distilled from a teacher model. 
Specifically, a pretrained diffusion model $\bm s_{\teacher}(\bm x_t)$ with parameters $\teacher$ serves as the teacher and induces a distribution $p_{\teacher}$, which we treat as the underlying data distribution, i.e., $p_{\mathtt{data}} = p_{\teacher}$. We employ the most straightforward distillation strategy by directly training a student model $\bm s_{\bm \theta}$ on samples drawn from $p_{\teacher}$. Generalization behaviors of the student model is then evaluated by comparing the student-induced distribution $p_{\bm \theta}$ with $p_{\teacher}$ using the generalization error metrics defined in \Cref{def:PFD_memorization_generalization}.


In our experiments for the rest of the paper, both the teacher and student models adopt the U-Net architecture \cite{unet}. The teacher model $\bm s_{\teacher}$ is trained on the CIFAR-10 dataset \cite{krizhevsky2009learning} with a fixed model architecture (UNet-10 introduced in \Cref{app:architecture}). The student model $\bm s_{\bm \theta}$ is trained on samples generated by the teacher, with the number of distillation samples varying from $N = 2^6$ to $N = 2^{16}$, using the same training hyperparameters but different model sizes. For evaluating the generalization error in \eqref{eqn:gen-error}, we compute the $\PFD$ between the teacher and student models using $M = 10^4$ samples drawn from shared initial noise, as defined in \eqref{eqn:empirical-pfd}. Similarly, for the memorization error, we compute the $\PFD$ between the student model and the empirical distribution of the training data. 
Additional details for the evaluation protocol and ablation studies are provided in \Cref{app:evaluate_protocol_setting} and \Cref{app:ablation_study}. 



\noindent \textbf{Comparison with practical metrics for evaluating generalization.}
Before we use the proposed metrics $\PFDg$ and $\PFDm$ for revealing the generalization properties of diffusion models in \Cref{sec:gen-behavior}, we conclude this section by demonstrating their advantages over commonly used practical metrics, such as $\mathtt{FID}$,  $\mathtt{FD}_{\mathtt{DINOv2}}$ \cite{stein2024exposing}, training and testing loss $\ell_{\mathtt{train}}, \ell_{\mathtt{test}}$ (see \Cref{eq:em loss}).
We defer a more comprehensive comparison with other metrics such as $\mathtt{IS}$, $\mathtt{NND}$, $\mathtt{KID}$ \cite{bińkowski2018demystifying}, $\mathtt{CMMD}$ \cite{jayasumana2024rethinking}, $\mathtt{Precision}$, and $\mathtt{Recall}$ \cite{kynkaanniemi2019improved} to \Cref{app:compare_metric}.

As shown in \Cref{fig:metric-comparison-selected}, we compare several metrics for capturing the MtoG transition under distillation. Among them, only the proposed metric $\PFDg$ consistently tracks this transition as the number of distillation samples increases. This conclusion is supported by the qualitative results in the bottom row of \Cref{fig:metric-comparison-selected}: when $N \geq 2^{10}$, increasing $N$ makes the distilled model’s generations (bottom row) visually closer to the teacher distribution (top row), indicating improved generalization. Among all metrics, only $\PFD$ captures this decreasing trend.

From a theoretical perspective, $\mathtt{FID}$, $\mathtt{FD}_{\mathtt{DINOv2}}$ relies on a Gaussian assumption for feature distributions, while $\ell_{\mathtt{train}}$ and $\ell_{\mathtt{test}}$ provide only upper bounds on the negative log-likelihood of the learned distribution $p_{\bm{\theta}}$ \cite{song2021maximum}; as a result, none of these metrics can accurately capture generalization. In contrast, $\PFD$ is both theoretically well-founded and empirically validated as a reliable metric for measuring generalization.

\section{Findings of Key Generalization Behaviors}\label{sec:gen-behavior}

Based on the evaluation protocol in \Cref{sec:gen-eval}, this section reveals several key generalization behaviors in diffusion distillation: (i) MtoG scaling behaviors with model capacity and distillation dataset size (\Cref{sec:scaling_behavior}), (ii) double descent in learning dynamics (\Cref{sec:training_dynamics}), and (iii) the bias-variance trade-off of generalization error (\Cref{subsec:bias-var}).

\begin{figure}[t]
\begin{center}
    \includegraphics[width = \linewidth]{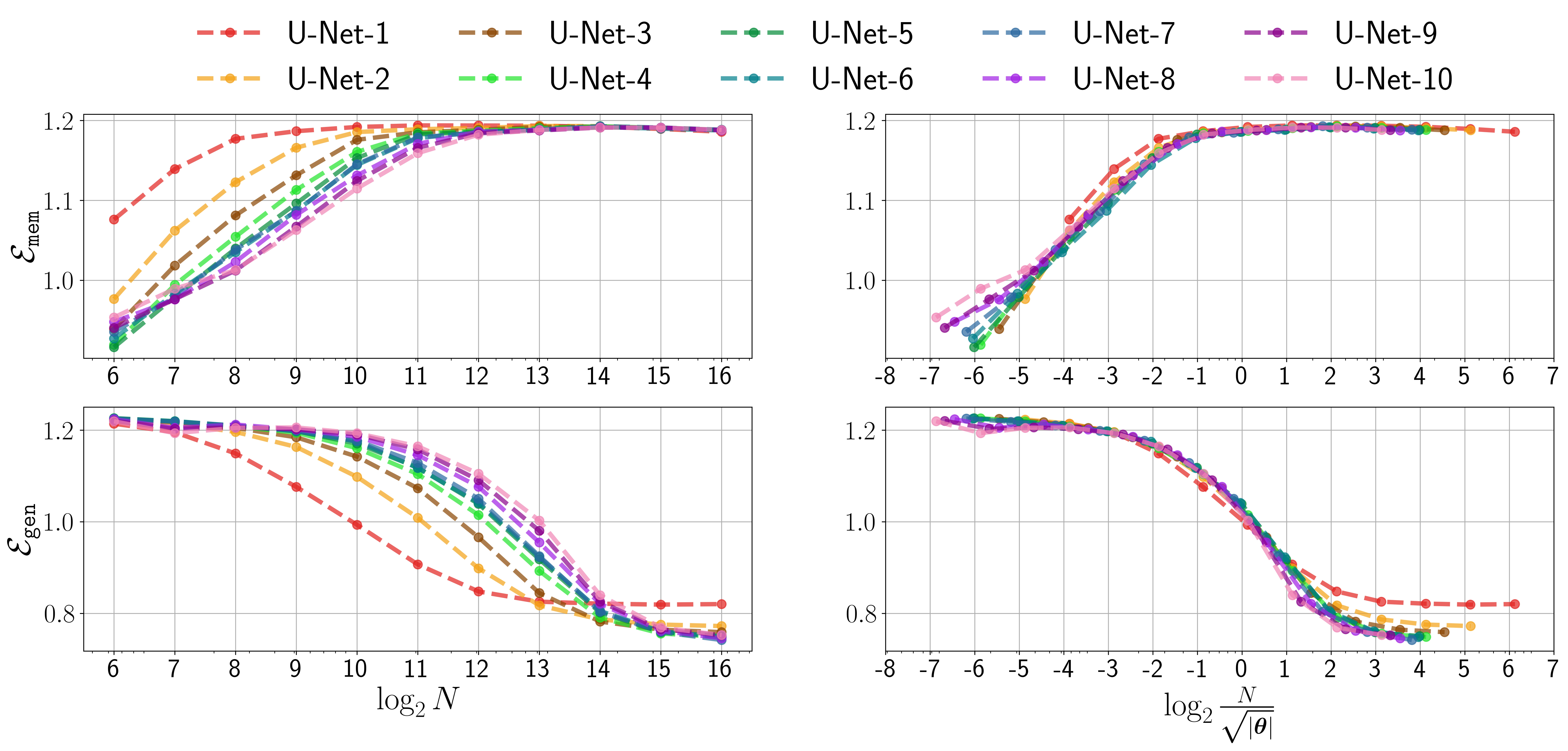}
    \caption{\textbf{Scaling behavior in the MtoG transition.} 
    Memorization error $\PFDm$ and generalization error $\PFDg$ as functions of the number of training samples $N$. Left: errors plotted against $\log_2(N)$ for a family of U-Net architectures (U-Net-1 through U-Net-10). Right: the same quantities plotted against $\log_2(N/\sqrt{|\bm \theta|})$, where $|\bm\theta|$ denotes the number of model parameters.}
    \label{fig:mem2gen}
\end{center}
\end{figure} 

\begin{figure}[h]
\centering
\includegraphics[width=.5\linewidth]{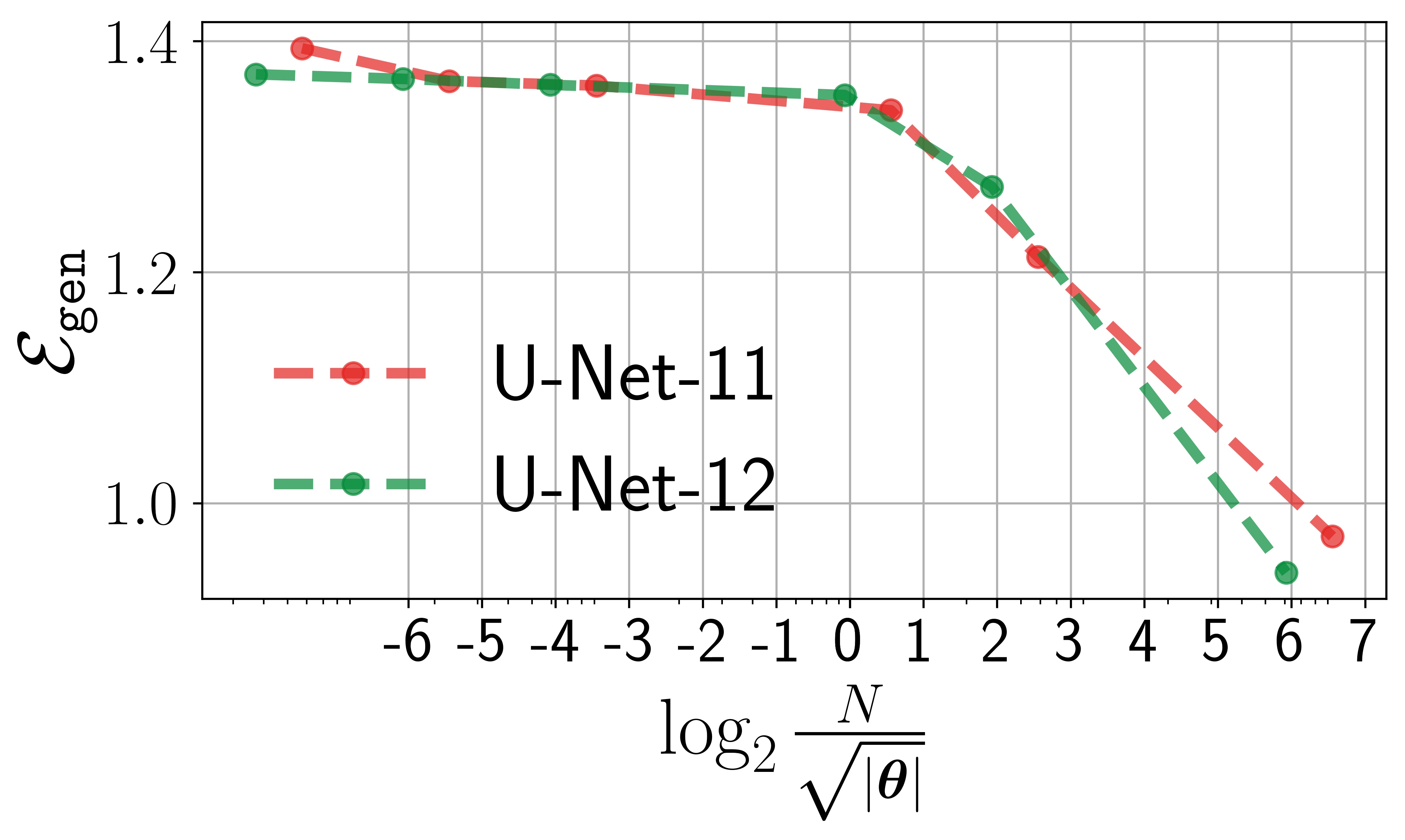}
\caption{\textbf{Scaling behavior over ImageNet dataset.} U-Net-11, U-Net-12 contains $124.2$M and $295.9$M parameters. Distilling data size $N$ ranging from $2^6$ to $2^{20}$.}
\label{fig:mem2gen_imagenet}
\end{figure}

\subsection{Scaling Behaviors of Generalization} \label{sec:scaling_behavior}

We investigate the scaling behavior of the distilled models with respect to both model capacity $|\bm{\theta}|$ and distillation data size $N$, using the metrics $\PFDg$ and $\PFDm$. We evaluate ten U-Net architectures over CIFAR-10 dataset, with model sizes ranging from 0.9M to 55.7M parameters (U-Net-1 to U-Net-10). For each model, we compute $\PFDm$ and $\PFDg$ across varying distillation dataset sizes, following the distillation setting outlined in \Cref{sec:gen-eval}. In addition, we conduct experiments on the ImageNet dataset \cite{deng2009imagenet}. Results for CIFAR-10 and ImageNet are reported in \Cref{fig:mem2gen} and \Cref{fig:mem2gen_imagenet}, respectively. Additional experimental details are provided in \Cref{app:scaling_behavior}. From these results, we observe the following:


\noindent \textbf{MtoG transitions governed by the ratio $N /\sqrt{|\bm \theta|}$.} As shown in \Cref{fig:mem2gen} (left), for a fixed model capacity $|\bm \theta|$, our metrics reveal a clear transition from memorization to generalization as the number of distillation samples $N$ increases, consistent with prior experiments training diffusion models from scratch. \cite{zhang2023emergence, yoon2023diffusion}. Moreover, in contrast to prior work that focuses solely on the effect of distillation sample size $N$, our results in \Cref{fig:mem2gen} (right) for CIFAR-10 and \Cref{fig:mem2gen_imagenet} for ImageNet reveal a consistent quantitative scaling behavior when using our metric, governed by the ratio $N / \sqrt{|\bm \theta|}$ between data size and model capacity. Remarkably, both $\PFDg$ and $\PFDm$ metrics exhibit near-identical MtoG transition curves across models of varying sizes when plotted against this ratio. As such, analogous to the empirical scaling laws observed in large language models~\cite{kaplan2020scaling}, this predictable trend provides practical guidance for the development of diffusion models, particularly when scaling up model size, data, or compute to achieve optimal performance gains.

\subsection{Generalization across Learning Dynamics} \label{sec:training_dynamics}

\begin{figure*}[t]
\begin{center}
    \includegraphics[width = \linewidth]{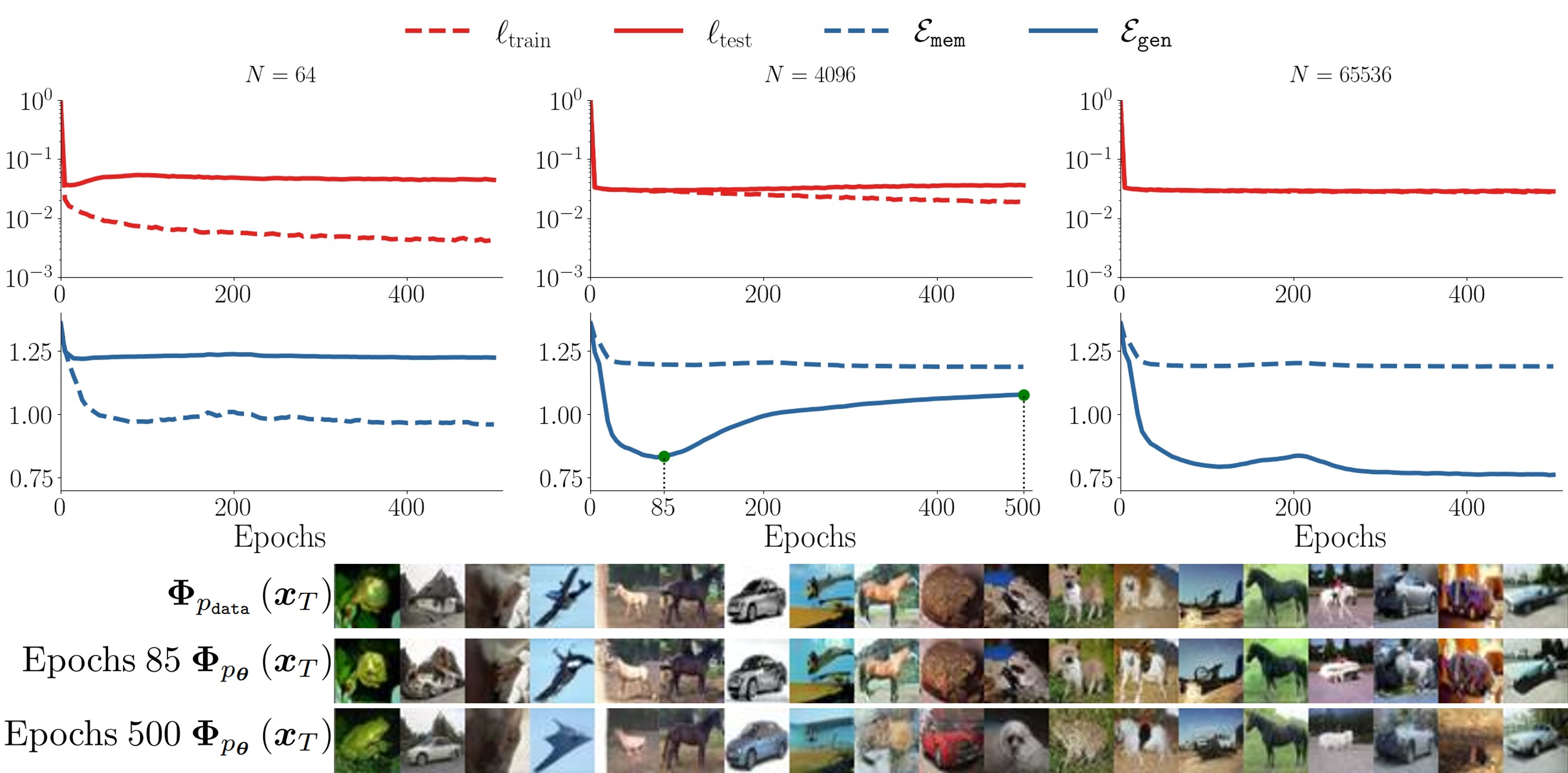}
    \caption{\textbf{Training dynamics of diffusion models in different regimes.} The top figure plots $\PFDm, \PFDg, \ell_{\mathtt{train}}, \ell_{\mathtt{test}}$ over training epochs for different different dataset sizes: $N = 2^6$ (left), $2^{12}$ (middle), $2^{16}$ (right). The bottom figure visualizes the generation when $N = 2^{12}$. The top row shows samples from the underlying distribution $\bm \Phi_{\pdata}(\bm{x}_T)$, while the middle and bottom rows display outputs from the trained diffusion model $\bm \Phi_{p_{\bm \theta}}(\bm{x}_T)$ at epoch 85 and 500, respectively. }\label{fig:training_dynamic_sscd}
\end{center}
\end{figure*} 

Building on the findings in \Cref{sec:scaling_behavior}, we further examine the generalization behavior across different training regimes. Under the distillation setting for CIFAR-10 dataset in \Cref{sec:gen-eval}, we analyze the learning dynamics of a U-Net model with fixed model capacity (UNet-10 introduced in \Cref{app:architecture}) distilled with $N = 2^6$, $2^{12}$, and $2^{16}$, corresponding to the memorization, transition, and generalization regimes in \Cref{sec:scaling_behavior}, respectively. The model is distilled using stochastic gradient descent (SGD) for 500 epochs, during which we track $\PFDm$, $\PFDg$, $\ell_{\mathtt{train}}$, and $\ell_{\mathtt{test}}$ at each epoch. The results in \Cref{fig:training_dynamic_sscd} reveal several notable generalization behaviors that align with phenomena previously observed in the training of overparameterized deep models \cite{zhang2017understanding, nakkiran2021deep}:

\noindent \textbf{Epochwise double descent of the generalization error in the generalization regime.} In contrast, as shown in \Cref{fig:training_dynamic_sscd} (right), distilling in the generalization regime ($N = 2^{16}$) reveals a clear instance of the \emph{double descent} phenomenon~\cite{nakkiran2021deep} in the generalization error. Specifically, the error initially decreases, then increases during intermediate distilling epochs, and finally decreases again as distilling approaches convergence. Notably, this non-monotonic behavior is not captured by the standard training and test losses $\ell_{\mathtt{train}}$ and $\ell_{\mathtt{test}}$, both of which decrease monotonically throughout distilling. This implies that extended distilling can improve generalization performance in the generalization regime.

\noindent \textbf{Epochwise early learning behavior in memorization and transition regimes.} As shown in \Cref{fig:training_dynamic_sscd} (left \& middle), in both the memorization ($N = 2^6$) and transition ($N = 2^{12}$) regimes, the generalization error initially decreases during distilling but reaches its minimum at an early epoch, after which it begins to increase again. This \emph{early learning} (or early generalization) phenomenon becomes more salient as the distillation sample size increases from the memorization to the transition regime. As shown in the visualization at the bottom of \Cref{sec:gen-eval}, the model at Epoch 85 clearly exhibits generalization, whereas the model at Epoch 500 fails to generalize. This is also corroborated by the divergence of training loss $\ell_{\mathtt{train}}$ and test loss $\ell_{\mathtt{test}}$ at the top of the figure. It is worth mentioning that, although early learning behavior has been theoretically and visually demonstrated in previous works \cite{li2024understanding, li2023generalization, bonnaire2025why}, our work develops the first framework for precisely capturing this phenomenon.

\subsection{Bias-variance Trade-off of the Generalization Error}\label{subsec:bias-var}

\begin{figure*}[t]
\begin{center}
    \begin{subfigure}[t]{0.35\linewidth}
        \includegraphics[width=\linewidth]{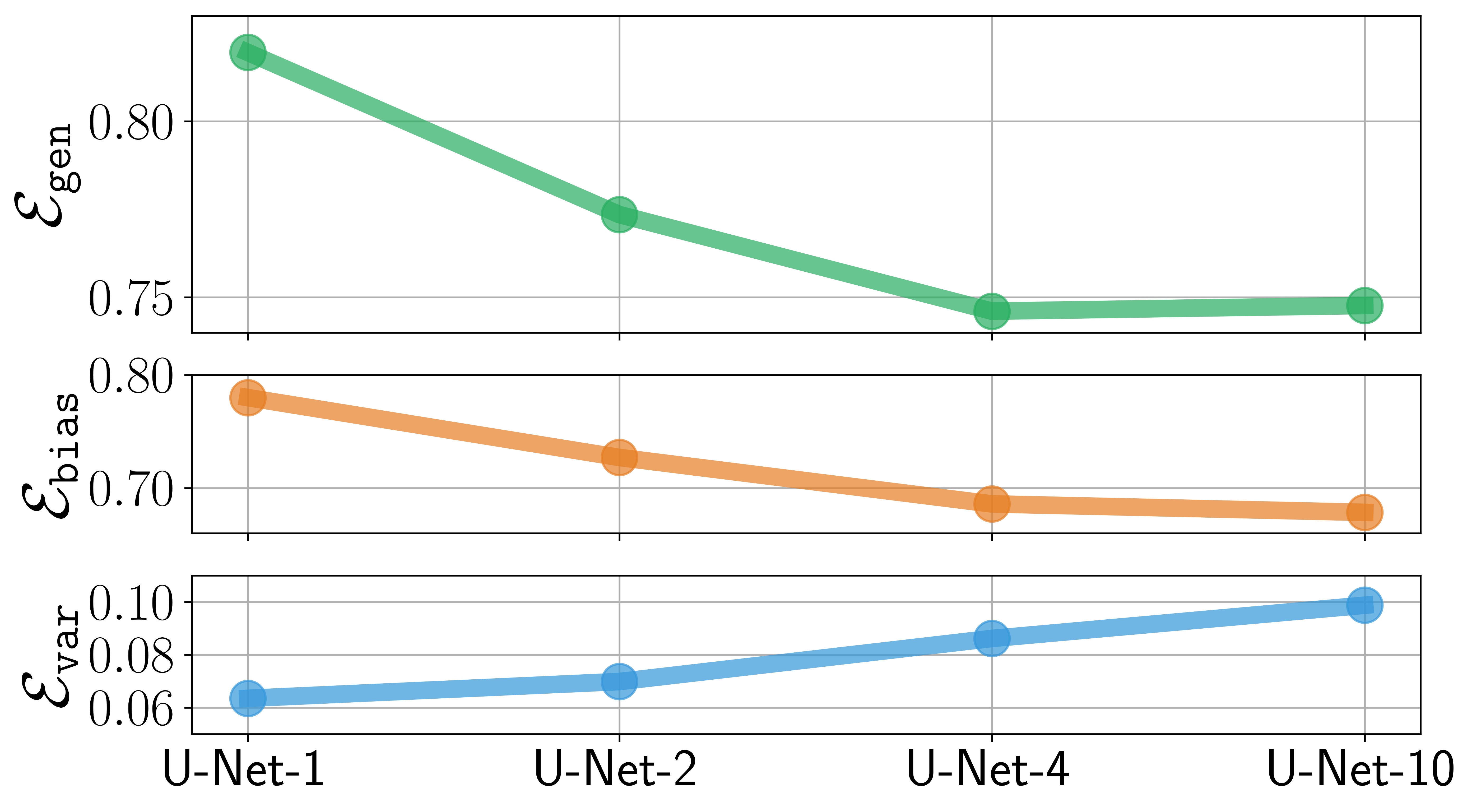}
        \caption{}
    \end{subfigure}
    \begin{subfigure}[t]{0.57\linewidth}
        \includegraphics[width=\linewidth]{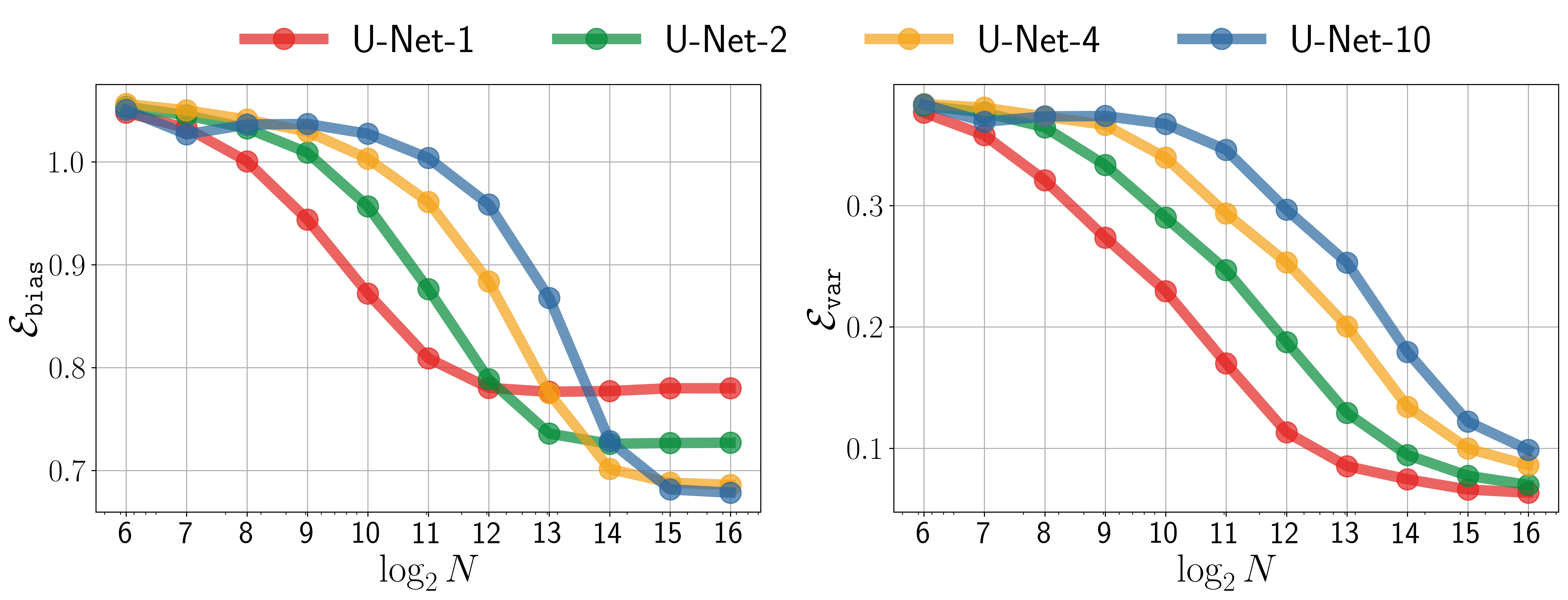}
        \caption{}
    \end{subfigure}
    \caption{\textbf{Bias–Variance Trade-off.} (a) plots the generalization error $\PFDg$, bias $\PFDb$, and variance $\PFDv$ across different network architectures with a fixed distillation sample size of $N = 2^{16}$. (b) shows $\PFDb$ and $\PFDv$ as functions of the number of distillation samples $N$ for various network architectures.}
    \label{fig:bias-variance-decomposition}
\end{center}
\end{figure*} 

In statistical learning theory, bias-variance trade-off is a classical yet fundamental concept in  supervised learning which helps us understand and analyze the sources of prediction error in the model \cite{kohavi1996bias, hastie2009elements,yang2020rethinking, belkin2019reconciling}. Specifically, bias–variance decomposition expresses the expected generalization error as the sum of two components: (i) the \emph{bias term}, which quantifies the discrepancy between the expected model prediction and the true function: high bias indicates systematic error or underfitting; and (ii) the \emph{variance term}, which measures the prediction variability of the model across different training sets: high variance reflects sensitivity to data fluctuations or overfitting.

However, in unsupervised learning settings such as diffusion models, the notion of generalization error was not well-defined prior to our work, in contrast to the well-established definitions in supervised learning. As a result, bias–variance decomposition in this context remains largely unexplored. In this work, we address this gap through the generalization error measure $\PFDg$ (see \Cref{eqn:gen-error}), which admits a bias–variance decomposition analogous to that in the supervised setting, as we detail below.
\begin{defi}[Bias-Variance Decomposition of $\PFDg$]
\label{def:PFD_bias_variance_decomposition}
Based on the same setup as \Cref{def:PFD_memorization_generalization}, 
we can decompose $\PFDg$ in \Cref{eqn:gen-error} as 
\begin{align}\label{def:pfd_bias_variance_decomposition}
    \mathbb E_{\mathcal{D}} \left[\PFDg^2\left(p_{\bm \theta\left(\mathcal{D}\right)}\right)\right] = \PFDb^2 + \PFDv
\end{align}
where $p_{\bm \theta\left(\mathcal{D}\right)}$ denotes the distribution induced by a diffusion model $\bm \theta\left(\mathcal{D}\right)$ trained on a given training dataset $\mathcal{D}$ sampled from $\pdata$. Specifically, the bias and variance terms are defined as:
\begin{align*}
   &\PFDb \coloneqq \mathbb E_{\bm x_T} ([\| \bm \Psi \circ \bm \Phi_{\pdata} (\bm x_T) - \overline{\bm \Psi \circ \bm \Phi}_{p_{\bm \theta}}  (\bm x_T) \|_2^2])^{1/2}, \\
   &\PFDv \coloneqq \mathbb E_{\mathcal{D}} \mathbb E_{\bm x_T} [ \| \bm \Psi \circ \bm \Phi_{p_{\bm \theta(\mathcal{D})}} (\bm x_T) - 
\overline{\bm \Psi \circ \bm \Phi}_{p_{\bm \theta}}  (\bm x_T) \|_2^2],
\end{align*}
with  $\overline{\bm \Psi \circ \bm \Phi}_{p_{\bm \theta}}  (\cdot) \coloneqq \mathbb E_{\mathcal D} [\bm \Psi \circ \bm \Phi_{p_{\bm \theta(\mathcal D)}} (\cdot ) ]$.

\end{defi}

Intuitively, our definitions of the bias term $\PFDb$ and the variance term $\PFDv$ are both well-justified: (i) $\PFDb$ quantifies the systematic error between the learned distribution $p_{\bm \theta}$ and the ground-truth distribution $\pdata$; and (ii) $\PFDv$ captures the variability of model predictions across different training sets by measuring the distance between $p_{\bm \theta}$ and the mean  $\overline{p_{\bm \theta}}$ which can be empirically estimated by averaging over multiple datasets $\mathcal D$ sampled from $\pdata$. 
Experimental results, following the protocol in \Cref{sec:gen-eval}, are shown in \Cref{fig:bias-variance-decomposition}, with detailed settings in \Cref{app:bias-var}.

In \Cref{fig:bias-variance-decomposition} (a), when diffusion models are distilled in the generalization regime, the resulting generalization decomposition aligns with classical bias–variance theory from supervised learning: as model complexity increases, the bias term $\PFDb$ decreases while the variance term $\PFDv$ increases, resulting in a U-shaped generalization error curve. Additionally, \Cref{fig:bias-variance-decomposition} (b) further illustrates the effect of the distillation sample size $N$ and number of parameters $|\bm \theta|$: increasing $N$ reduces both $\PFDb$ and $\PFDv$, thereby lowering the generalization error $\PFDg$, as expected; In contrast, increasing $|\bm \theta|$ consistently increases $\PFDv$, and its effect on $\PFDb$ depends on the size of $N$: it decreases $\PFDb$ when $N \geq 2^{15}$ but increases it when $N \leq 2^{11}$.

\section{Discussion \& Conclusion}
While $\PFD$ is designed to evaluate generalization in distillation settings, we believe its impact extends beyond distillation and is highly relevant to diffusion model training:

\noindent \textbf{Distillation is closely aligned with training from scratch.}  
The teacher-student framework has been widely adopted to understand learning phenomena observed in real-world scenarios, both empirically \cite{betzalel2024evaluation, lee2021continual, saglietti2022analytical} and theoretically \cite{advani2020high, goldt2019dynamics, d2020triple}. In this work, the phenomena we observe, such as the MtoG transition (left column of \Cref{fig:mem2gen}) and early learning dynamics (middle column of \Cref{fig:training_dynamic_sscd}), are consistent with observations reported in prior studies of models trained from scratch. This alignment suggests that distillation provides a tractable and controllable setting for understanding training in real-world.

\noindent \textbf{Training from scratch is increasingly close to distillation.}  
Modern diffusion models are often trained on large-scale datasets constructed from heterogeneous sources. Due to the scarcity of sufficiently high-quality real-world data, large-scale datasets such as LAION-5B \cite{schuhmann2022laion} inevitably contain a substantial fraction of synthetic data generated by foundation models. As a result, training from scratch increasingly resembles a teacher-student setting, where models implicitly learn from other pretrained models rather than purely from natural data.
 
Taken together, these facts motivate the use of $\PFD$ beyond explicit distillation settings, as a general tool for analyzing generalization in real-world large-scale diffusion models.

\section{Impact Statement}

This work advances the theoretical and empirical understanding of generalization in diffusion models by introducing Probability Flow Distance ($\PFD$). Improved understanding of generalization can help mitigate risks associated with memorization, including privacy leakage and copyright concerns. We do not anticipate direct negative societal impacts arising from this work; however, as with all advances in generative modeling, downstream applications should be developed and deployed responsibly.

\section*{Acknowledgment}

HJZ, ZJH, SYC, ZKZ, and QQ acknowledge support from NSF CCF-2212066, NSF CCF-
2212326, NSF IIS 2402950. The authors acknowledge valuable discussions with Prof. Saiprasad Ravishankar (MSU), Prof. Rongrong Wang (MSU), Prof. Jun Gao (U. Michigan), Mr. Xiang Li (U. Michigan), and Mr. Xiao Li (U. Michigan).

\printbibliography

\newpage

\appendix

The appendix is organized as follows. We first discuss related work in \Cref{sec:related-work}. Next, we provide detailed proofs for \Cref{sec:pf-distance} in \Cref{app:proof}. Experimental settings and additional discussions for \Cref{sec:gen-eval} and \Cref{sec:gen-behavior} are presented in \Cref{app:exp}. We then offer further discussion related to $\PFDm$ in \Cref{app:estimate_m_score}. Finally, ablation studies for $\PFD$ are included in \Cref{app:ablation_study}.

\section{Related Works}\label{sec:related-work}

In this section, we briefly review related work on generalization metrics for diffusion models, discuss diffusion model generalizability, and cover the fundamentals of training diffusion models.

\subsection{Generalization Metrics for Diffusion Models}

Generalization metrics quantify the distance between the learned distribution and the underlying data distribution in diffusion models. To measure this distributional gap, theoretical works commonly employ metrics such as Kullback-Leibler (KL) divergence \cite{chen2023restoration, nie2024the, li2023generalization}, total variation (TV) \cite{chen2023the, li2024towards, li2024accelerating, yang2024leveraging, li2024sharp,liang2025low}, and Wasserstein distance \cite{gao2025convergence, debortoli2022convergence, chen2023improved, Gao2023WassersteinCG}. However, these metrics are practically inefficient for diffusion models. Practical metrics focus on various perspective, including negative log-likelihood (\texttt{NLL}) \cite{scoreSDE}, image generation quality: Fréchet inception distance ($\texttt{FID}$) \cite{heusel2017gans}, inception score ($\texttt{IS}$) \cite{salimans2016improved}, $\texttt{FD}_{\texttt{dinov2}}$ \cite{stein2024exposing}, maximum mean discrepancy ($\texttt{MMD}$) \cite{bińkowski2018demystifying}, CLIP maximum mean discrepancy (CMMD) \cite{jayasumana2024rethinking}; alignment: $\texttt{CLIPscore}$ \cite{hessel2021clipscore}, and $\texttt{precision, recall}$ \cite{sajjadi2018assessing, kynkaanniemi2019improved}. However, these practical metrics are not explicitly designed to evaluate the generalizability of diffusion models. Thus, there is a need for a generalization metric that are both theoretical grounded and practically efficient for diffusion models. To address this gap, we propose $\PFD$, a novel generalization metric that is theoretically proven to be a valid distributional distance and can be efficiently approximated by its empirical version using a polynomial number of samples. In practice, $\PFD$ requires fewer samples for estimation and is the only existing metric that explicitly quantifies generalization in diffusion models.

\subsection{Diffusion Model Generalizability}

Recent works have shown that diffusion models transition from memorization to generalization as the number of training samples increases \cite{yoon2023diffusion, zhang2023emergence}. With sufficient data, models trained with different architectures, loss functions, and even disjoint datasets can reproduce each other's outputs, indicating a strong convergence toward the underlying data distribution \cite{zhang2023emergence, kadkhodaie2023generalization}. To explain this strong generalization, \cite{kadkhodaie2023generalization} attribute it to the emergence of a geometric-adaptive harmonic basis, while others argue that generalization arises from interpolation across the data manifold \cite{aithal2024understanding, chen2025on}. In contrast, \cite{vastola2025generalization} explain generalization through inductive biases in the noise, whereas \cite{zhang2026generalization} attribute it to the emergence of a balanced representation space. Theoretical insights by \cite{li2023generalization} provide generalization bounds using KL-divergence under simplified models, \cite{ye2026provable} establish a dual theoretical separation explaining memorization and generalization.
More recent efforts focus on characterizing the learned noise-to-image mapping for generalized diffusion models, either through Gaussian parameterizations \cite{li2024understanding, wang2024unreasonable}, mixture of low rank Gaussian parameterizations \cite{wang2024diffusion, chen2024exploring}, multi-subspace multi-modal modeling \cite{yang2026multi} or patch-wise optimal score functions \cite{niedoba2024towards, kamb2024analytic}, and analyses based on statistical properties of image datasets \cite{lukoianov2025locality}. In parallel, generalization has also been studied through the lens of associative memory \cite{pham2025memorization, ambrogioni2024search} and sparse autoencoders \cite{tinaz2025emergence}. \cite{shi2025a} establishes a connection between generalization and model collapse.  However, despite these theoretical analyses and qualitative insights, prior work lacks a quantitative framework for measuring generalizability. In this paper, we propose $\PFD$, a metric that enables such quantitative evaluation. Using this measure, we uncover further insights into the generalization behavior of diffusion models, as discussed in \Cref{sec:gen-behavior}.

\subsection{Diffusion Model Distillation}
Recent work on diffusion distillation has demonstrated substantial efficiency gains through both lightweight student architectures \cite{chen2025snapgen, kim2024bk, hu2026snapgen++} and reduced sampling trajectories \cite{yin2024one, yin2024improved, salimans2022progressive, song2023consistency, meng2023distillation, sauer2024adversarial, xie2024em}. Representative approaches include progressive distillation \cite{salimans2022progressive, berthelot2023tract}, adversarial distillation \cite{jolicoeur-martineau2021adversarial, Xu_2024_CVPR, sauer2024adversarial}, and consistency-based distillation methods \cite{song2023consistency, wang2024phased}. Beyond architectural and sampling efficiency, Meng et al. \cite{meng2023distillation} distill the classifier-free guidance score \cite{ho2022classifier} to further accelerate guidance at inference time. Moreover, diffusion distillation has been successfully applied beyond efficiency improvements, including text-to-3D generation \cite{dreamfusion}, learning from corrupted data distributions \cite{zhang2025restoration}, and even achieving unexpected gains in generation quality \cite{ma2025diffusion}. In this paper, we adopt the most straightforward distillation paradigm, where the student is trained on data generated by a teacher model. Nevertheless, $\PFD$ is orthogonal to the choice of distillation strategy and can be naturally extended to more advanced techniques, such as progressive, adversarial, or consistency-based distillation frameworks, offering a flexible foundation for future extensions.

\subsection{Training Diffusion Models} \label{app:train_diffusion_model}

To enable sampling via the PF-ODE \eqref{eq:reve_ode_em}, we train a neural network $\bm s_{\bm\theta}(\bm x_t, t)$ to approximate the score function $\nabla \log p_{t}(\bm{x}_t)$ using denoising score matching loss \cite{scoreSDE}:  
\begin{align}\label{eq:em loss}
\min_{\bm \theta} \ell(\bm \theta) = \frac{1}{N}\sum_{i=1}^N \int_0^T \lambda_t \E_{\bm \epsilon\sim \mathcal{N}(\bm 0, T^2 \bm I_n)} & \left[\left\|\bm s_{\bm\theta}(\bm x^{(i)} + t \bm \epsilon, t)   + \bm \epsilon/t\right\|_2^2\right] \mathrm{d}t,  
\end{align}

$\lambda_t$ denotes a scalar weight for the loss at $t$. Given the learned score function, the corresponding noise-to-image mapping is:
\begin{align}\label{eq:n2i_integral_n}
    \bm \Phi_{p_{\bm \theta}}(\bm x_T) = \bm x_T - \int_{T}^{0}t \bm s_{\bm \theta}(\bm{x}_t, t) \mathrm{d}t.
\end{align}
Although alternative training objectives exist, such as predicting noise $\bm x_T$ \cite{ddpm}, clean image $\bm x_0$ \cite{EDM}, rectified flow $\bm x_T - \bm x_0$ \cite{RF} or other linear combinations of $\bm x_0$ and $\bm x_T$ \cite{salimans2022progressive}, prior works \cite{luo2022understanding, gao2025diffusion} have shown that it is still possible to recover an approximate score function $\bm s_{\bm \theta}(\bm{x}_t, t)$ from these methods. 
\section{Proof in Section \ref{sec:pf-distance}}\label{app:proof}

\begin{proof}[Proof of \Cref{thm:properties}]
It is trivial to show $\PFD(p, q) > 0$ for any $p \neq q$ and $\PFD(p, q) = \PFD(q, p)$, and thus we omit the proof. 

\begin{itemize}[leftmargin=*]
\item Proof of $p = q \Leftrightarrow \PFD(p, q) = 0:$
    \begin{itemize}
        \item ($\Rightarrow$) If $p = q$, $\nabla\log p_{t} \left(\bm x_t\right) = \nabla\log q_{t} \left(\bm x_t\right)$, thus:
        \begin{equation} \label{eq:ode_difference}
            \mathrm{d}\bm x_t = - t \left(\nabla \log p_{t}(\bm x_t) - \nabla \log q_{t}(\bm x_t)\right) \rm d t = 0
        \end{equation}
        Thus, $\bm \Phi_{p}(\bm x_T) - \bm \Phi_{q}(\bm x_T)$ is the solution of the ODE function \Cref{eq:ode_difference} with initial $\bm x_T = \bm 0$. Thus $\bm \Phi_{p}(\bm x_T) - \bm \Phi_{1}(\bm x_T) = \bm 0$ for all $\bm x_T$. Thus $\PFD(p, q) = 0$
        \item ($\Leftarrow$) If $\PFD(p, q) = 0$ and $\bm \Phi_{p}, \bm \Phi_{q}$ are continuous function w.r.t $\bm x_T$, then we have $\bm \Phi_{p} (\bm x_T) = \bm \Phi_{q} (\bm x_T)$ for all $\bm x_T$. If $\bm x_0 = \bm \Phi(\bm x_T)$, from the transformation of probability identities, we have:
        \begin{equation}\label{eq:trans_probability}
            p(\bm x_{0}) = \frac{\partial}{\partial [\bm x_{0}]_{1}} \ldots \frac{\partial}{\partial [\bm x_{0}]_{n}} \int_{\{\bm \epsilon| \bm \Phi(\bm \epsilon) \leq \bm x_{0}\}} p_{\mathcal{N}}(\bm \epsilon) \mathrm{d}^{n} \bm \epsilon,
        \end{equation}
    
        where $[\bm x_{0}]_{i}$ denotes the $i$-th element of $\bm x_{0}$, $\bm f(\bm \epsilon) \leq \bm x_{0}$ denotes the element wise less or equal. $p_{\mathcal{N}}(\cdot)$ is the probability density function (PDF) of Gaussian distribution $\mathcal{N}\left(\bm 0, T^2 \bm I_n\right)$. Thus, for all $\bm x_0$ we have:
        \begin{align}
        \begin{split}
            p(\bm x_{0}) - q(\bm x_{0}) &= \frac{\partial}{\partial [\bm x_{0}]_{1}} \ldots \frac{\partial}{\partial [\bm x_{0}]_{n}} \int_{\{\bm \epsilon| \bm \Phi_{p}(\bm \epsilon) \leq \bm x_{0}\}} p_{\mathcal{N}}(\bm \epsilon) \mathrm{d}^{n} \bm \epsilon \\
            & \ \ \ \ \ - \frac{\partial}{\partial [\bm x_{0}]_{1}} \ldots \frac{\partial}{\partial [\bm x_{0}]_{n}} \int_{\{\bm \epsilon| \bm \Phi_{q}(\bm \epsilon) \leq \bm x_{0}\}} p_{\mathcal{N}}(\bm \epsilon) \mathrm{d}^{n} \bm \epsilon, \\
            &= \frac{\partial}{\partial [\bm x_{0}]_{1}} \ldots \frac{\partial}{\partial [\bm x_{0}]_{n}} \int_{\{\bm \epsilon| \bm \Phi_{p}(\bm \epsilon) \leq \bm x_{0}\}} p_{\mathcal{N}}(\bm \epsilon) \mathrm{d}^{n} \bm \epsilon \\
            & \ \ \ \ \ - \frac{\partial}{\partial [\bm x_{0}]_{1}} \ldots \frac{\partial}{\partial [\bm x_{0}]_{n}} \int_{\{\bm \epsilon| \bm \Phi_{p}(\bm \epsilon) \leq \bm x_{0}\}} p_{\mathcal{N}}(\bm \epsilon) \mathrm{d}^{n} \bm \epsilon, \\ 
            &= 0,
        \end{split}
        \end{align}
        so $p = q$.
    \end{itemize}
\item Proof of $\PFD(p, q) \leq \PFD(p, p') + \PFD(p', q) $:
    \begin{align}
    \begin{split}
        &\PFD(p, q) \\
        = &\left( \mathbb E_{\bm x_T \sim \mathcal{N}(0, T^2  \bm I)} \left[\left\|  \bm \Phi_{p} \left(\bm x_T\right) -  \bm \Phi_{q}  \left(\bm x_T\right) \right\|^2_2\right] \right)^{1/2} \\
        \leq &\left( \mathbb E_{\bm x_T \sim \mathcal{N}(0, T^2  \bm I)} \left[\left(\left\|  \bm \Phi_{p} \left(\bm x_T\right) -  \bm \Phi_{p'}  \left(\bm x_T\right) \right\|_2 + \left\|  \bm \Phi_{p} \left(\bm x_T\right) -  \bm \Phi_{p'}  \left(\bm x_T\right) \right\|_2\right)^2\right]\right)^{1/2} \\
        \leq &\left( \mathbb E_{\bm x_T \sim \mathcal{N}(0, T^2  \bm I)} \left[\left\|  \bm \Phi_{p} \left(\bm x_T\right) -  \bm \Phi_{q}  \left(\bm x_T\right) \right\|^2_2\right] \right)^{1/2} \\
        & \  + \left( \mathbb E_{\bm x_T \sim \mathcal{N}(0, T^2  \bm I)} \left[\left\|  \bm \Phi_{p} \left(\bm x_T\right) -  \bm \Phi_{q}  \left(\bm x_T\right) \right\|^2_2\right] \right)^{1/2} \\
        = &\PFD(p, p') + \PFD(p', q) \\
    \end{split}
    \end{align}
\end{itemize}
\end{proof}

\begin{lemma}\label{lemma:lip_diff_n2i_mapping}
    Under \Cref{assump:lip_diff_score}, for all $\bm x_T \in \mathcal{N}(\bm 0, T^2\bm I_n)$, as $T \rightarrow \infty$, we have:
    \begin{equation}
        \left\| \bm \Phi_{p} \left(\bm x_T\right) - \bm \Phi_{q}  \left(\bm x_T\right) \right\|_2 \leq \exp\left(\dfrac{LT^2_\xi}{2}\right) \xi + \dfrac{\epsilon}{L}\left(\exp\left(\dfrac{LT^2_\xi}{2}\right) - 1\right),
    \end{equation}
    where $\xi$ is a numerical constant and a finite timestep $T_\xi$ depending only on $\xi$.
\end{lemma}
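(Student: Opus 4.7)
The plan is to introduce the pointwise error $\bm e(t) := \bm x_t^p - \bm x_t^q$ between the two backward PF-ODE trajectories initialized at the common noise $\bm x_T$, and control $\|\bm e(0)\|$ by splitting the integration interval into a ``tail'' piece $[T_\xi, T]$ and a ``bulk'' piece $[0, T_\xi]$.

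First, I would write down the ODE governing $\bm e$: subtracting \eqref{eq:reve_ode_em} for $p$ and $q$ gives $d\bm e/dt = -t\bigl[\nabla\log p_t(\bm x_t^p) - \nabla\log q_t(\bm x_t^q)\bigr]$. Using the telescoping split
\[
\nabla\log p_t(\bm x_t^p) - \nabla\log q_t(\bm x_t^q) = \underbrace{\bigl[\nabla\log p_t(\bm x_t^p) - \nabla\log p_t(\bm x_t^q)\bigr]}_{\text{bounded by } L\|\bm e(t)\| \text{ via \Cref{assump:lip_diff_score}(i)}} + \underbrace{\bigl[\nabla\log p_t(\bm x_t^q) - \nabla\log q_t(\bm x_t^q)\bigr]}_{\text{bounded by } \epsilon \text{ via \Cref{assump:lip_diff_score}(ii)}},
\]
and integrating backward from $T_\xi$ down to any $t \in [0, T_\xi]$ yields the integral inequality $\|\bm e(t)\| \leq \|\bm e(T_\xi)\| + \int_t^{T_\xi} \tau\bigl(L\|\bm e(\tau)\| + \epsilon\bigr)\,d\tau$.

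Second, I would apply Grönwall's inequality to this integral inequality on the bulk $[0, T_\xi]$ (after changing variables to $s = T_\xi - t$). Since the effective rate is $L(T_\xi - s)$, the integral $\int_0^{T_\xi} L(T_\xi - s)\,ds = LT_\xi^2/2$ enters the exponent. A further change of variable $w = T_\xi - r$ in the inhomogeneous term of the Grönwall formula collapses the remaining integral: $\int_0^{T_\xi} \epsilon w \exp(Lw^2/2)\,dw = (\epsilon/L)\bigl(\exp(LT_\xi^2/2)-1\bigr)$. The result is
\[
\|\bm e(0)\| \;\leq\; \|\bm e(T_\xi)\|\,\exp\!\bigl(LT_\xi^2/2\bigr) + \tfrac{\epsilon}{L}\bigl(\exp(LT_\xi^2/2) - 1\bigr).
\]

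Third, I would establish the tail estimate $\|\bm e(T_\xi)\| \leq \xi$ for a numerical constant $\xi$ with a corresponding finite $T_\xi$ that is \emph{independent of $T$}. Since both $p_t$ and $q_t$ converge to the common Gaussian $\mathcal{N}(\bm 0, t^2 \bm I_n)$ as $t \to \infty$, their scores both approach the limiting Gaussian score $-\bm x_t/t^2$, which forces both trajectories to stay close to the deterministic Gaussian reference trajectory $(t/T)\bm x_T$ throughout $[T_\xi, T]$. Plugging this tail bound into the bulk Grönwall bound yields the claimed inequality.

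\emph{Main obstacle.} The hard part is precisely the tail estimate. A naive Grönwall applied on $[T_\xi, T]$ using only \Cref{assump:lip_diff_score} gives a bound growing like $\epsilon T^2\exp(LT^2)$ and blows up as $T \to \infty$, so the argument cannot merely reuse the Lipschitz/uniform-closeness bounds on the tail. Instead one must exploit the stronger fact that both scores converge uniformly to the same Gaussian limit fast enough to dominate the Grönwall growth, so that the increment of $\bm e$ accumulated on $[T_\xi, T]$ remains bounded by $\xi$ uniformly in $T$. Once this tail bound is secured, the bulk Grönwall calculation cleanly assembles into the stated inequality.
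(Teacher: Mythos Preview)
Your proposal is correct and follows essentially the same approach as the paper: split the backward evolution into a tail $[T_\xi,T]$ and a bulk $[0,T_\xi]$, use the common-Gaussian limit of the two trajectories to bound $\|\bm e(T_\xi)\|\le\xi$, then apply Gr\"onwall on the bulk with the telescoping decomposition into the Lipschitz part and the $\epsilon$-closeness part. The paper's own tail argument is actually terser than yours---it simply invokes the $\epsilon$--$\delta$ definition of the limit $\lim_{t\to\infty}\bm\phi_t=\bm 0$ rather than spelling out the convergence to the shared Gaussian score---so your identification of this step as the ``main obstacle'' is well placed, but no additional idea beyond what you outlined is needed.
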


\begin{proof}[Proof of \Cref{lemma:lip_diff_n2i_mapping}]
    Let $\bm \phi_t, t \in [0, T]$ denotes the ODE trajectory:
    \begin{equation}
    \begin{aligned}
        &\bm \phi_t = \bm x^{p}_{t} - \bm x^{q}_{t}, \\
        &\bm x^{p}_{t} = \bm x_T - \int_{T}^{t}\tau\nabla_{\bm x} \log p_{\tau}(\bm x^{p}_{\tau}) \mathrm{d}\tau, \\
        &\bm x^{q}_{t} = \bm x_T - \int_{T}^{t}\tau\nabla_{\bm x} \log q_{\tau}(\bm x^{q}_{\tau}) \mathrm{d}\tau, \\
    \end{aligned}
    \end{equation}
    From the definition, $\bm \phi_0 =  \bm \Phi_{p} \left(\bm x_T\right) - \bm \Phi_{q}  \left(\bm x_T\right)$. Because $\lim_{T \rightarrow \infty} \bm \phi_t = \bm x_T - \bm x_T = \bm 0$, from the $\epsilon-\delta$ definition of the limit, given $\bm x_T$, and a constant $\xi$ , there exists a finite $T_\xi$ related to $\xi$ such that:
    \begin{equation}
        \left\|\bm \phi_t\right\|_2 \leq \xi \ \ \text{ for all } t \geq T_\xi.
    \end{equation}
    As $t \leq T_\xi$, we have:
    \begin{equation}\label{eq:ode_after_T0}
        \begin{aligned}
            &\frac{\mathrm{d} \bm \phi_t}{\mathrm{d} t} = - t \left(\nabla_{\bm x} \log p_{t}(\bm x^p_{t}) - \nabla_{\bm x} \log q_{t}(\bm x^q_{t})\right), \\
            &\left\|\bm \phi_{T_\xi}\right\|_2 \leq \xi.
        \end{aligned}
    \end{equation}
    Apply \Cref{assump:lip_diff_score} to \Cref{eq:ode_after_T0}, we could obtain the following integral inequality w.r.t $\left\|\bm \phi_t\right\|_2$:
    \begin{equation}    
        \begin{aligned}
            \frac{\mathrm{d} \left\|\bm \phi_t\right\|_2}{\mathrm{d} t}&\leq \left\|\frac{\mathrm{d} \bm \phi_t}{\mathrm{d} t}\right\|_2 \\
            &\leq t \left\|\nabla_{\bm x} \log p_{t}(\bm x^p_{t}) - \nabla_{\bm x} \log q_{t}(\bm x^q_{t})\right\|_2 \\
            &\leq t\left(\epsilon + L \left\|\bm \phi_t\right\|_2\right), \\
            \left\|\bm \phi_{T_\xi}\right\|_2 &\leq \xi, \ \ 0 \leq t \leq T_\xi,        \end{aligned}
    \end{equation}
    where the first inequality comes from the fact that $\dfrac{\mathrm{d} \left\|\bm \phi_t\right\|_2}{\mathrm{d} t} \leq \left\|\dfrac{\mathrm{d} \bm \phi_t}{\mathrm{d} t}\right\|_2$. From Grönwall's inequality \cite{coddington1955theory}, we could solve $\left\| \bm \Phi_{p} \left(\bm x_T\right) - \bm \Phi_{q}  \left(\bm x_T\right) \right\|_2 = \left\|\bm \phi_0\right\|_2 \leq \exp(\dfrac{LT^2_\xi}{2}) \xi + \dfrac{\epsilon}{L}\left(\exp(\dfrac{LT^2_\xi}{2}) - 1\right).$
\end{proof}

\begin{proof}[Proof of \Cref{thm:empirical_approximation}]Let $\bm X \coloneqq \left\| \bm \Phi_{p} \left(\bm x_T\right) - \bm \Phi_{q}  \left(\bm x_T\right) \right\|^2_2$. From \Cref{lemma:lip_diff_n2i_mapping}, 
\begin{align*}
    0 \leq \bm X \leq \kappa^2\left(L, \epsilon\right),
\end{align*}
with $\kappa\left(L, \epsilon\right) \coloneqq \exp\left(\dfrac{LT^2_\xi}{2}\right) \xi + \dfrac{\epsilon}{L}\left(\exp\left(\dfrac{LT^2_\xi}{2}\right) - 1\right)$. From Hoeffding's inequality \cite{hoeffding1994probability}, we have:
\begin{equation} \label{eq:hoeffding_inequality}
\P\left(\left|\E [\bm X] - \dfrac{1}{M} \sum\limits_{i = 1}^{M} \bm X_i\right| \geq \gamma\right) \leq 2 \exp\left(- \frac{2 M \gamma^2}{\kappa^4\left(L, \epsilon\right)}\right),
\end{equation}
with $M$ samples to achieve $\gamma$ accuracy. Thus, we could guarantee $\P\left(\left|\E [\bm X] - \dfrac{1}{M} \sum\limits_{i = 1}^{M} \bm X_i\right| \leq \gamma\right)$ with probability $\eta$, when:
\begin{equation}
    M \geq \frac{\kappa^4\left(L, \epsilon\right)}{2 \gamma^2}\log \frac{2}{\eta}.
\end{equation}

Because 
\begin{align}
    \left| \PFD(p, q) - \hat{\PFD}(p, q) \right| 
    =& \left|\sqrt{\E [\bm X]} - \sqrt{\dfrac{1}{M} \sum\limits_{i = 1}^{M} \bm X_i}\right| \\
    \leq& \sqrt{\left|\E [\bm X] - \dfrac{1}{M} \sum\limits_{i = 1}^{M} \bm X_i\right|}.
\end{align}

We could guarantee that $\P\left(\left| \PFD(p, q) - \hat{\PFD}(p, q) \right| \leq \gamma\right)$ with probability $\eta$, when:
\begin{equation}
    M \geq \frac{\kappa^4\left(L, \epsilon\right)}{2 \gamma^4}\log \frac{2}{\eta}.
\end{equation}

\end{proof}

\begin{exam}\label{exam:rp_score_gaussian}
 The Wasserstein-2 distance $\mathrm{W}_2(\cdot, \cdot)$ is the lower bound of the probability flow distance, i.e., 
 \begin{equation}\label{thm:relation_w2_pfd}
    \mathrm{W}_2(p, q) \leq \PFD(p, q),
\end{equation}
 Specifically, let $p$ and $q$ be multivariate Gaussian distribution $\mathcal{N}\left(\bm \mu_1, \bm \Sigma_1\right)$, $\mathcal{N}\left(\bm \mu_2, \bm \Sigma_2\right)$, respectively, where $\bm \mu_1, \bm \mu_2 \in \mathbb R^{n}$ and $\bm \Sigma_1, \bm \Sigma_2 \in \mathbb R^{n \times n}$. The $\PFD$ is given by
\begin{equation}
    \PFD\left(p, q\right) = \left(\left\|\bm \mu_1 - \bm \mu_2\right\|_2 + \left\|\bm \Sigma_1^{1/2} - \bm \Sigma_2^{1/2}\right\|_F\right)^{1/2} ,
\end{equation}
under this case, the equality $\PFD(p, q) = W_2(p, q)$ holds when $\bm \Sigma_1 \bm \Sigma_2 = \bm \Sigma_2 \bm \Sigma_1$.
\end{exam}

\begin{proof}[Proof of \Cref{exam:rp_score_gaussian}]

Proof of $W_2(p, q) \leq \PFD(p, q)$. From the definition of Wasserstein-2 distance:

\begin{align}
    W_2(p, q) = \inf_{\gamma\in \Gamma\left(p, q\right) } \left(\mathbb{E}_{\left(\bm x_{p}, \bm x_{q}\right) \sim \gamma } \left\|\bm x_{p} - \bm x_{q}\right\|_2^2\right)^{1/2},
\end{align}

where $\Gamma\left(p, q\right)$ is the set of all couplings of $p$ and $q$. As proofed by \cite{scoreSDE}, the noise-to-image mapping $\bm \Phi_{p}$ and $\bm \Phi_{q}$ pushes the Gaussian distribution $\mathcal{N}\left(\bm 0, T^2 \bm I_n\right)$ to the $p$ and $q$ distribution respectively. Thus we could find the coupling $\gamma_{\PFD} \coloneqq \left (\bm \Phi_{p}, \bm \Phi_{q} \right)_{\#}\mathcal{N}\left(\bm 0, T^2 \bm I_n\right)$, i.e., the pushforward of $\mathcal{N}\left(\bm 0, T^2 \bm I_n\right)$ by $\left(\bm \Phi_{p}, \bm \Phi_{q} \right)$, such that
\begin{equation}
    \PFD(p, q) = \left(\mathbb{E}_{\left(\bm x_{p}, \bm x_{q}\right) \sim \gamma_{\PFD} } \left\|\bm x_{p} - \bm x_{q}\right\|_2^2\right)^{1/2} \geq W_2(p, q)
\end{equation}

When distribution $p\left(\bm x\right)$ is Gaussian distribution $\mathcal{N}\left(\bm \mu, \bm \Sigma\right)$ with $\bm \mu \in \mathbb R^{n}$ and $\bm \Sigma \in \mathbb R^{n \times n}$. Thus $p_t\left(\bm x\right)$ is $\mathcal{N}\left(\bm \mu, \bm \Sigma + \sigma^2_t \bm I_{n}\right)$, thus the score function could be caluclated as,
\begin{align}\label{eq:score_gaussian}
    \nabla_{\bm x} \log p_t\left(\bm x\right) = \left( \bm \Sigma + t^2 \bm I_n\right)^{-1} \left(\bm \mu - \bm x\right).
\end{align}
By plugging in \Cref{eq:score_gaussian} to \Cref{eq:n2i_integral}, we could obtain the ODE equation w.r.t $\bm x$:
\begin{align}\label{eq:ode_gaussian}
    \mathrm{d}\bm{x} =  - t \left( \bm \Sigma + t^2 \bm I_n\right)^{-1} \left(\bm \mu - \bm x\right) \mathrm{d}t,.
\end{align}

The above ODE equation has a close form solution:
\begin{align}\label{eq:ode_sol_gaussian}
    \bm x_t =  \bm \mu + \bm U \diag\left(\left[\sqrt{\frac{\lambda_1 + t^2}{\lambda_1 + T^2}}, \ldots, \sqrt{\frac{\lambda_n + t^2}{\lambda_n + T^2}}\right]\right)\bm U^\top \left(\bm x_T - \bm \mu\right) 
\end{align}

where $\bm U, \lambda_k, k \in [n]$ are singlar value decomposition of $\bm \Sigma$, $\bm \Sigma = \bm U \diag\left(\left[\lambda_1, \ldots, \lambda_n\right]\right)\bm U^\top$. $\diag\left(\cdot\right)$ convert a vector in $\mathbb{R}^{n}$ into diagonal matrix $\mathbb{R}^{n \times n}$, and $\bm x_T \sim \mathcal {N}\left(\bm 0, T^2 \bm I_n\right)$. Let $\bm x_T = T \bm \epsilon$ with $\bm \epsilon \sim \mathcal {N}\left(\bm 0, \bm I_n\right)$. As $t = 0$ and $T \rightarrow \infty$, we have:
\begin{align}
    \bm x_t &= \left(\bm I_n - \bm U \diag\left(\left[\sqrt{\frac{\lambda_1 + t^2}{\lambda_1 + T^2}}, \ldots, \sqrt{\frac{\lambda_n + t^2}{\lambda_n + T^2}}\right]\right)\bm U^\top\right) \bm \mu, \\
    &\ \ \ \ \ \ \ + \bm U \diag\left(\left[T\sqrt{\frac{\lambda_1 + t^2}{\lambda_1 + T^2}}, \ldots, T\sqrt{\frac{\lambda_n + t^2}{\lambda_n + T^2}}\right]\right)\bm U^\top \bm x_T, \\
    &= \bm \mu + \bm U \diag\left(\left[\sqrt{\lambda_1}, \ldots, \sqrt{\lambda_n}\right]\right)\bm U^\top \bm x_T, \\
    &= \bm \mu + \bm \Sigma^{1/2} \bm x_T = \bm \Phi \left(\bm x_T\right).
\end{align}

Thus, plugging in \Cref{def:pfd}, we have:
\begin{align}
    \PFD\left(p, q\right) 
    &= \left(\mathbb E_{\bm x_T \sim \mathcal{N}(0, T^2\bm I)} \left[\left\| \bm \Phi_{1} \left(\bm x_T\right) - \bm \Phi_{2}  \left(\bm x_T\right) \right\|_2^2\right]\right)^{1/2} \\
    &= \left(\mathbb E_{\bm x_T \sim \mathcal{N}(0, T^2\bm I)}\left[\left\| \bm \mu_1 + \bm \Sigma_1^{1/2} \bm x_T -  \bm \mu_2 - \bm \Sigma_2^{1/2} \bm x_T\right\|_2^2\right]\right)^{1/2} \\
    &= \left(\left\|\bm \mu_1 - \bm \mu_2\right\|_2^2 + \left\|\bm \Sigma_1^{1/2} - \bm \Sigma_2^{1/2}\right\|^2_F\right)^{1/2} \\
    &= \left(\left\|\bm \mu_1 - \bm \mu_2\right\|_2^2 + \tr\left(\bm \Sigma_1 + \bm \Sigma_2 - 2 \bm \Sigma_1^{1/2}\bm \Sigma_2^{1/2}\right)\right)^{1/2}
\end{align}

From Wasserstein-2 distance for Gaussian distribution $p, q$ has close form solution: 
\begin{align}
    W_2\left(p, q\right) 
    &= \left(||\bm \mu_1 - \bm \mu_2||_2^2 + \tr\left(\bm \Sigma_1 + \bm \Sigma_2 - 2 \left(\bm \Sigma_1^{1/2} \bm \Sigma_2 \bm \Sigma_1^{1/2}\right)^{1/2}\right)\right)^{1/2}.
\end{align}
From \Cref{lemma:trace inqueality}, we have $W_2\left(p, q\right) \leq \PFD\left(p, q\right)$. And specifically, $W_2\left(p, q\right) = \PFD\left(p, q\right)$ when $\bm \Sigma_1 \bm \Sigma_2 = \bm \Sigma_2 \bm \Sigma_1$.
\end{proof}

\begin{lemma}\label{lemma:trace inqueality}
    Given two positive semi-definite matrix $\bm \Sigma_1, \bm \Sigma_2 \in \mathbb{R}^{n \times n}$, 
    \begin{equation}
        0 \leq \tr\left(\bm \Sigma_1^{1/2}\bm \Sigma_2^{1/2}\right) \leq \tr\left(\left(\bm \Sigma_1^{1/2}\bm \Sigma_2 \bm \Sigma_1^{1/2}\right)^{1/2}\right) .
    \end{equation}
\end{lemma}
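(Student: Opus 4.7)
The plan is to handle the two inequalities separately. For the lower bound, I would rewrite the trace using cyclicity as
\begin{equation*}
\tr\!\left(\bm\Sigma_1^{1/2}\bm\Sigma_2^{1/2}\right) \;=\; \tr\!\left(\bm\Sigma_1^{1/4}\bm\Sigma_2^{1/2}\bm\Sigma_1^{1/4}\right),
\end{equation*}
and observe that the matrix inside is the symmetric conjugation of the PSD matrix $\bm\Sigma_2^{1/2}$ by $\bm\Sigma_1^{1/4}$, hence PSD itself, so its trace is nonnegative.

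For the upper bound, the main idea is to recognize $\tr\!\bigl((\bm\Sigma_1^{1/2}\bm\Sigma_2\bm\Sigma_1^{1/2})^{1/2}\bigr)$ as a nuclear norm. Setting $M \coloneqq \bm\Sigma_2^{1/2}\bm\Sigma_1^{1/2}$, I would compute
\begin{equation*}
M^\top M \;=\; \bm\Sigma_1^{1/2}\bm\Sigma_2^{1/2}\bm\Sigma_2^{1/2}\bm\Sigma_1^{1/2} \;=\; \bm\Sigma_1^{1/2}\bm\Sigma_2\bm\Sigma_1^{1/2},
\end{equation*}
so the eigenvalues of $(\bm\Sigma_1^{1/2}\bm\Sigma_2\bm\Sigma_1^{1/2})^{1/2}$ are exactly the singular values $\sigma_i(M)$ of $M$. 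Therefore the right-hand side equals $\|M\|_* = \sum_i \sigma_i(M)$.

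Next I would compare this to the left-hand side. By the cyclic property, $\tr\bigl(\bm\Sigma_1^{1/2}\bm\Sigma_2^{1/2}\bigr) = \tr(M)$, and a standard fact (bounding the trace by the sum of absolute values of eigenvalues, which are in turn bounded by the singular values) gives
\begin{equation*}
\tr(M) \;\leq\; \bigl|\tr(M)\bigr| \;\leq\; \sum_{i} |\lambda_i(M)| \;\leq\; \sum_i \sigma_i(M) \;=\; \|M\|_*.
\end{equation*}
Stitching these together yields $\tr(\bm\Sigma_1^{1/2}\bm\Sigma_2^{1/2}) \leq \tr\!\bigl((\bm\Sigma_1^{1/2}\bm\Sigma_2\bm\Sigma_1^{1/2})^{1/2}\bigr)$, completing the proof.

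The main obstacle, if any, is simply recognizing the factorization $M^\top M = \bm\Sigma_1^{1/2}\bm\Sigma_2\bm\Sigma_1^{1/2}$ that converts the right-hand side into a nuclear norm; once that is in hand, the inequality reduces to the classical $|\tr(M)| \le \|M\|_*$. Note also that equality in the upper bound occurs exactly when $M$ is PSD (so that $|\lambda_i| = \lambda_i = \sigma_i$), which corresponds to $\bm\Sigma_1$ and $\bm\Sigma_2$ commuting, matching the equality condition stated in \Cref{exam:rp_score_gaussian}.
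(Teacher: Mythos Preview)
Your proof is correct and follows essentially the same route as the paper: both arguments identify $\tr\bigl((\bm\Sigma_1^{1/2}\bm\Sigma_2\bm\Sigma_1^{1/2})^{1/2}\bigr)$ as the nuclear norm $\|\bm\Sigma_1^{1/2}\bm\Sigma_2^{1/2}\|_*$ (your $M$ is just the transpose of the paper's choice) and then invoke the standard inequality $\tr(M)\le\|M\|_*$. Your lower-bound justification via the conjugation $\bm\Sigma_1^{1/4}\bm\Sigma_2^{1/2}\bm\Sigma_1^{1/4}$ and your remark on the equality case are more explicit than what the paper writes, but the underlying strategy is identical.
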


\begin{proof}[Proof of \Cref{lemma:trace inqueality}]
    Because $\bm \Sigma_1, \bm \Sigma_2$ are positive semi-definite matrix, $\tr\left(\bm \Sigma_1^{1/2}\bm \Sigma_2^{1/2}\right) \geq 0$ and
    \begin{align}
        \tr\left(\left(\bm \Sigma_1^{1/2}\bm \Sigma_2 \bm \Sigma_1^{1/2}\right)^{1/2}\right) = \tr\left(\sqrt{\left(\bm \Sigma_1^{1/2}\bm \Sigma_2 ^{1/2}\right) \left(\bm \Sigma_1^{1/2}\bm \Sigma_2 ^{1/2}\right)^\top }\right) = \left\|\bm \Sigma_1^{1/2}\bm \Sigma_2 ^{1/2}\right\|_*,
    \end{align}
    where $||\cdot||_*$ is the nuclear norm (trace norm). From trace norm inequality (\cite{bhatia2013matrix} Chapter IV, Section 2), for a random matrix $\bm M$, $\tr\left(\bm M\right) \leq \left\|\bm M\right\|_*$. Thus, we have:
    \begin{equation}
        \tr\left(\bm \Sigma_1^{1/2}\bm \Sigma_2^{1/2}\right) \leq\left\|\bm \Sigma_1^{1/2}\bm \Sigma_2 ^{1/2}\right\|_*.
    \end{equation}

\end{proof}
\section{Experiments}
\label{app:exp}

In this section, we provide experimental details and additional discussion of the main results.

\subsection{ Comparison of different metrics on synthetic datasets.}\label{app:compare_synthetic_metric}

\begin{figure}
\centering
\begin{subfigure}{0.35\linewidth}
    \centering
    \includegraphics[width=\linewidth]{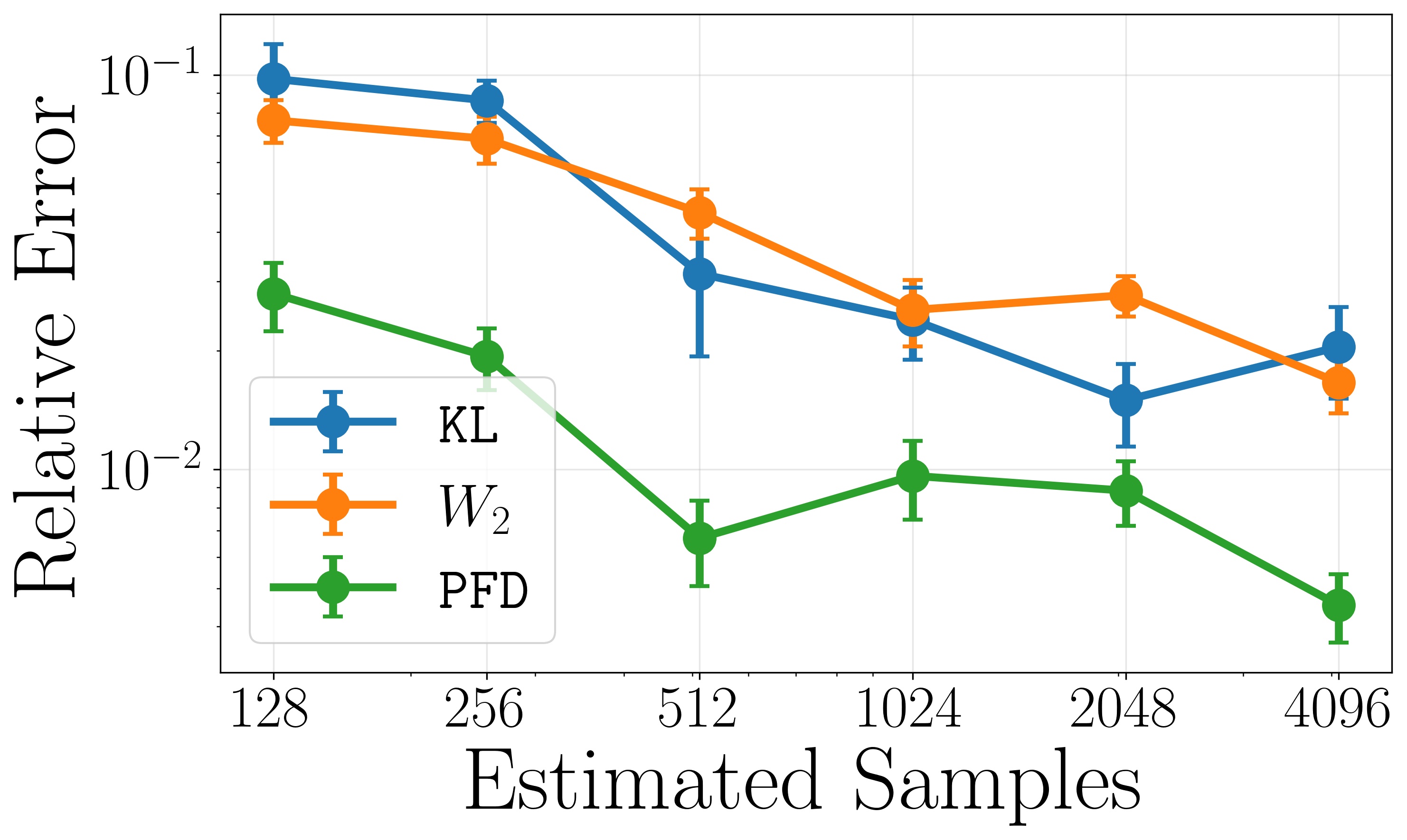}
    \caption{}
    \label{fig:compare_sample_efficiency}
\end{subfigure}
\begin{subfigure}{0.35\linewidth}
    \centering
    \includegraphics[width=\linewidth]{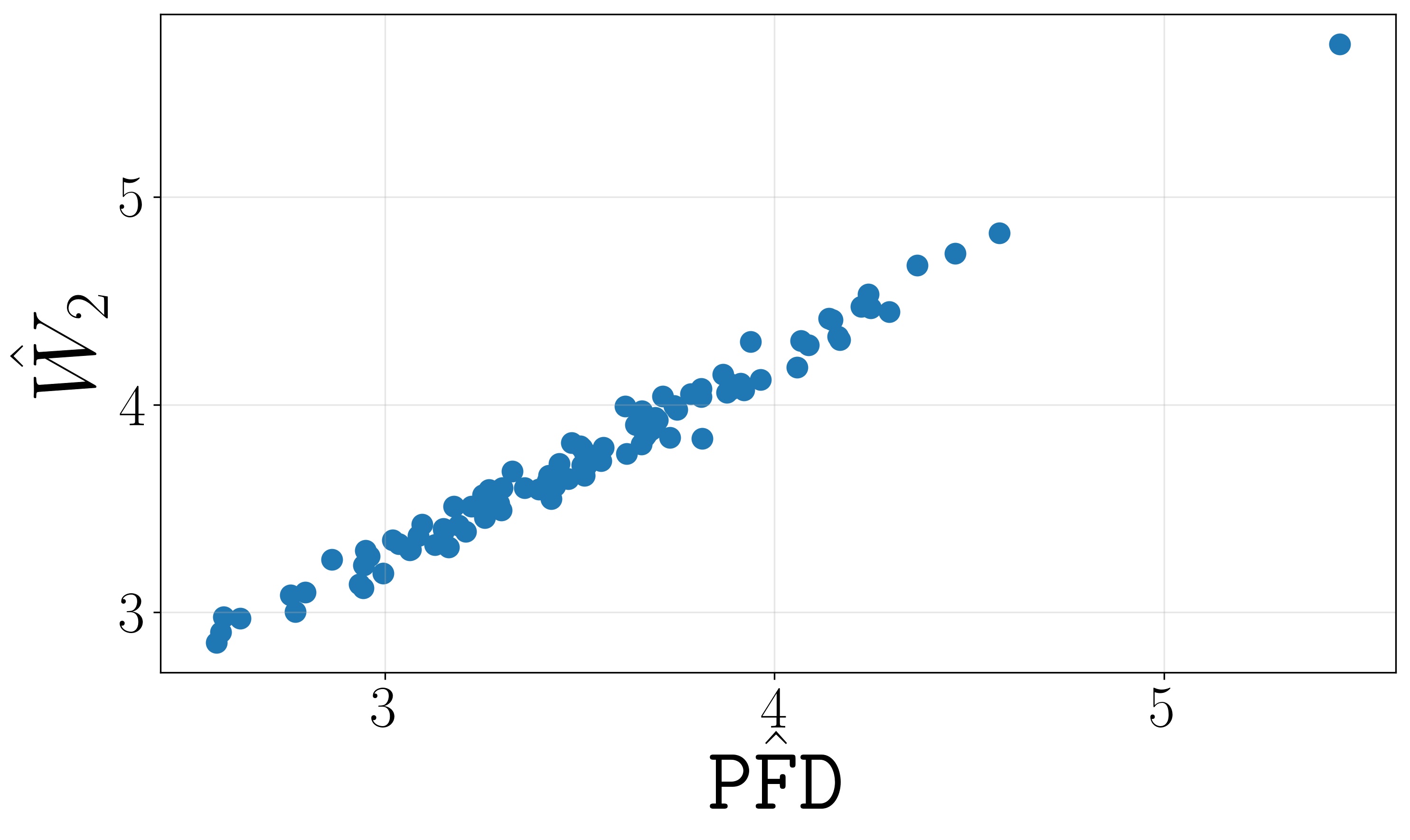}
    \caption{}
    \label{fig:compare_correlation}
\end{subfigure}
\caption{\textbf{Comparison of different metrics on synthetic datasets.} The figure illustrates (a) the sample efficiency of $\mathtt{KL}$, $W_2$, and $\PFD$ under a Gaussian distribution, and (b) the correlation between $W_2$ and $\PFD$ under a mixture of Gaussians.}
\label{fig:compare_synthetic}
\end{figure}

To support the claim that $\PFD$ is both sample-efficient and a meaningful distributional metric, we conduct numerical experiments on synthetic datasets, comparing $\PFD$ with the KL divergence ($\mathtt{KL}$) and the 2-Wasserstein distance ($W_2$). Specifically, in \Cref{fig:compare_sample_efficiency}, we evaluate the sample efficiency of these methods under multivariate Gaussian distributions. We set the dimension of the Gaussian distributions to $5$, randomly generate their means and variances, and repeat the experiment $10$ times. By varying the number of samples used for estimation from $128$ to $4096$, denoted by $M$, we report the relative error $|\mathcal{E} - \hat{\mathcal{E}}| / |\mathcal{E}|$, where $\mathcal{E}$ denotes the ground-truth value and $\hat{\mathcal{E}}$ its empirical estimate.
In \Cref{fig:compare_correlation}, we examine the correlation between estimates of $W_2$ and $\PFD$ on mixtures of Gaussian distributions. We consider mixtures of $5$ Gaussian components, each with $5$ dimensions. Both metrics are estimated using $M = 4096$ samples, and the experiment is repeated $100$ times to generate the plotted results.

As shown in \Cref{fig:compare_sample_efficiency}, $\PFD$ exhibits significantly better sampling efficiency than both $\mathtt{KL}$ and $W_2$. With $M=4096$ samples, $\PFD$ achieves a relative error of approximately $4 \times 10^{-2}$, whereas $\mathtt{KL}$ and $W_2$ incur errors on the order of $2 \times 10^{-1}$. Moreover, the computational complexity of $\PFD$ is $O(M)$, compared to $O(M^2)$ for $W_2$. Consequently, under the same sampling budget, $\PFD$ is substantially more efficient in both estimation accuracy and computational cost.

As illustrated in \Cref{fig:compare_correlation}, when measuring the distance between two mixture-of-Gaussian distributions, the estimated $\hat{\PFD}$ exhibits a strong linear correlation with $\hat{W}_2$ (with correlation coefficient 0.992). This result indicates that $\PFD$ captures meaningful distributional distance, which align with 2-Wasserstein distance, even for more complex, multimodal distributions.

\subsection{Network Architecture Details} \label{app:architecture}

In this subsection, we provide details of the U-Net architectures, as summarized in \Cref{tab:architecture}. The U-Net follows an encoder-decoder design, where the encoder comprises multiple encoder blocks. The column "\textbf{Dimensions for encoder blocks}" indicates the feature dimensions of each encoder block, while "\textbf{Number of residual blocks}" specifies how many residual blocks are used within each encoder block. The decoder is symmetric to the encoder. For further architectural details, please refer to \cite{edm_code}. By varying the encoder block dimensions and the number of residual blocks, we scale the U-Net model from 0.9M to 55.7M parameters.

\begin{table}[]
    \centering
    \caption{\textbf{U-Net architectures details.}}
    \resizebox{\textwidth}{!}{%
    \begin{tabular}{lccc}
    \toprule
    \textbf{Name} & \textbf{Dimensions for encoder blocks} & \textbf{Number of residual blocks} & \textbf{Number of parameters $|\bm \theta|$} \\
    \midrule
    U-Net-1 & [32, 32, 32] & 4 & 0.9M \\
    U-Net-2 & [64, 64, 64] & 4 & 3.5M \\
    U-Net-3 & [96, 96, 96] & 4 & 7.9M \\
    U-Net-4 & [128, 128, 128] & 4 & 14.0M \\
    U-Net-5 & [80, 160, 160] & 4 & 17.1M \\
    U-Net-6 & [160, 160, 160] & 3 & 17.8M \\
    U-Net-7 & [160, 160, 160] & 4 & 21.8M \\
    U-Net-8 & [192, 192, 192] & 4 & 31.3M \\
    U-Net-9 & [224, 224, 224] & 4 & 42.7M \\
    U-Net-10 & [256, 256, 256] & 4 & 55.7M \\
    \bottomrule
    \end{tabular}
    }
    \label{tab:architecture}
\end{table}

\subsection{Evaluation Protocol} \label{app:evaluate_protocol_setting}

In this subsection, we provide details of the evaluation protocol introduced in \Cref{sec:gen-eval}, as well as the comparison between the synthetic dataset from the teacher model and the real dataset. 

\noindent \textbf{Experiment settings for evaluation protocol.} The teacher model $\bm \theta_t$ and the student model $\bm \theta$ share a similar U-Net architecture \cite{unet} with different numbers of parameters, as introduced in \Cref{app:architecture}. The teacher model, with UNet-10 architecture, is trained on the CIFAR-10 dataset \cite{krizhevsky2009learning} using the EDM noise scheduler \cite{EDM}, with a batch size of 128 for 1,000 epochs. The student model \footnote{The architecture of the student model varies across experiments and will be described in detail for each specific case.} is trained using the variance-preserving (VP) noise scheduler \cite{ddpm}, under the same training hyperparameters. We use one A40 GPU with 48 GB video random access memory (VRAM) for all experiments. We generated three subsets of initial noise $\{\bm{x}^{(i)}_{\texttt{train}, T}\}_{i=1}^{N}, \{\bm{x}^{(i)}_{\texttt{gen}, T}\}_{i=1}^{M}, \{\bm{x}^{(i)}_{\texttt{test}, T}\}_{i=1}^{M} \overset{\mathrm{iid}}{\sim} \mathcal{N}(0, T^2 \bm{I}_n)$. The training and test datasets are produced using the teacher model: 
\begin{equation*}
  \mathcal{D} \coloneqq \{\bm x^{(i)}_{\texttt{train}} \}_{i = 1}^{N} = \{\bm \Phi_{p_{\bm \theta_t}}(\bm{x}^{(i)}_{\texttt{train}, T}) \}_{i = 1}^{N},\ \  \mathcal{D}_{\texttt{test}} \coloneqq \{\bm x^{(i)}_{\texttt{test}} \}_{i = 1}^{M} = \{\bm \Phi_{p_{\bm \theta_t}}(\bm{x}^{(i)}_{\texttt{test}, T}) \}_{i = 1}^{M}.
\end{equation*}
To evaluate the student model, we generate an evaluation dataset from itself: 
\begin{equation*}
   \mathcal{D}_{\texttt{gen}} \coloneqq \{\bm x^{(i)}_{\texttt{gen}} \}_{i = 1}^{M} = \{\bm \Phi_{p_{\bm \theta}}(\bm{x}^{(i)}_{\texttt{gen}, T}) \}_{i = 1}^{M}.
\end{equation*}
All samples are generated using the second-order Heun solver \cite{EDM} with 18 sampling steps. We vary the number of training samples $N$ from $2^6$ to $2^{16}$ in powers of two. $M$ is set to 50,000 for the experiments in \Cref{app:compare_metric}, and 10,000 for the rest.

\textbf{Experiment settings for validating the synthetic dataset with real real-world dataset.} We evaluate $\texttt{FID}$ and $\PFDm$ for diffusion models with UNet-4 architecture, trained separately on the synthetic dataset $\mathcal{D}$ and CIFAR-10 training dataset. We keep the number of training dataset $N$ the same for these two settings, ranging from $2^6$ to $2^{15}$, with a power of 2. Then we evaluate the $\mathtt{FID}$ between $\mathcal{D}_{\texttt{gen}}$ and $\mathcal{D}_{\texttt{test}}$ (CIFAR-10 test dataset) for the synthetic (real-world) setting, with $M= 10000$. To evaluate $\PFDm$, we use the initial noise $\{\bm x^{(i)}_{\texttt{gen}} \}_{i = 1}^{M}$. 

\subsection{Comparison with Practical Metrics for Generalization Evaluation}
\label{app:compare_metric}

\begin{figure}[t]
\begin{center}
    \includegraphics[width = \linewidth]{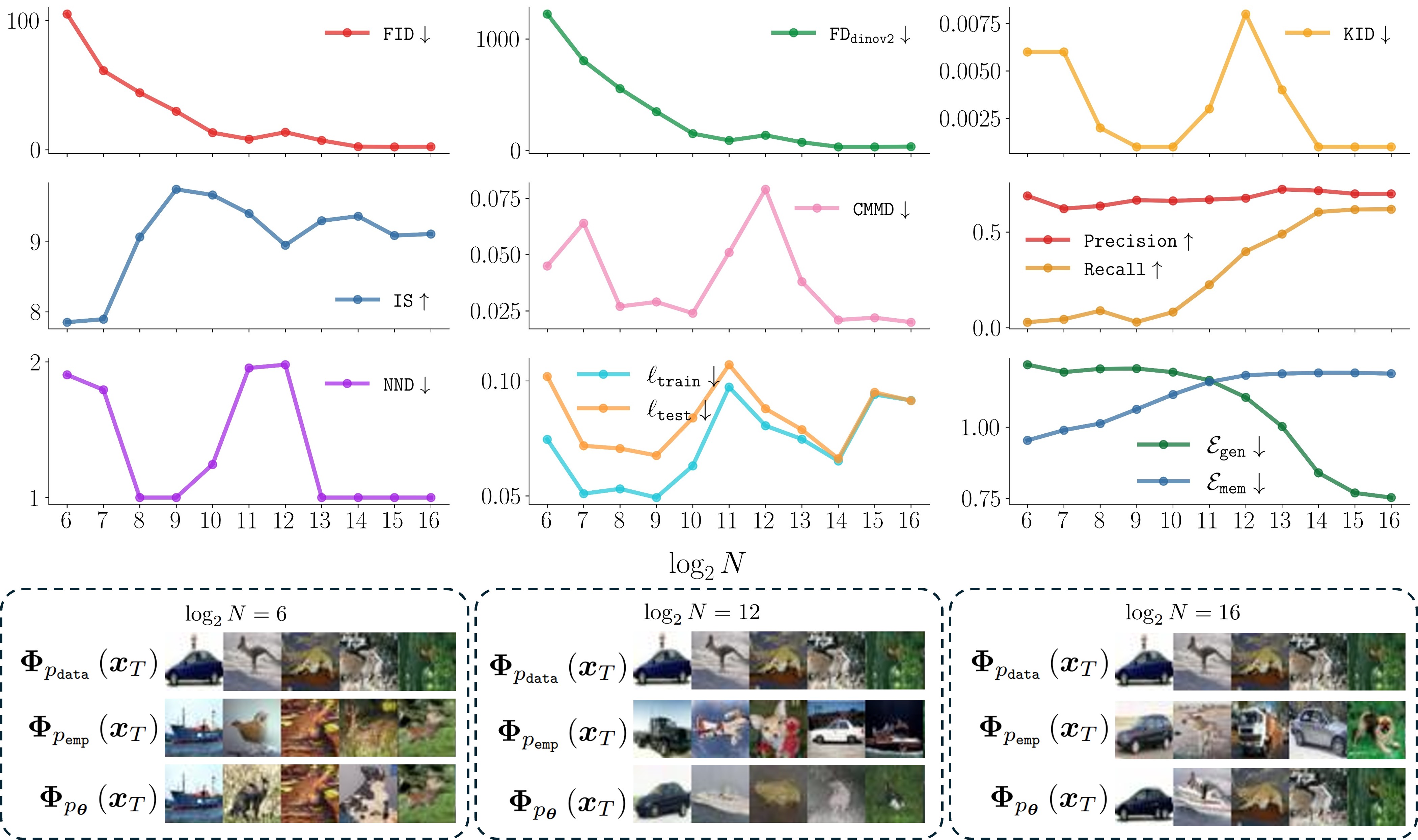}
    \caption{\textbf{Comparison of practical metrics on the MtoG transition.} The top figure plots multiple evaluation metrics as functions of $\log_2 N$. The bottom figure visualizes the generation under three numbers of training samples ($2^6, 2^{12}, 2^{16}$). For each setting, the figure shows generations from the underlying distribution (top row), empirical data distribution (middle row), and the learned distribution from the diffusion model (bottom row). Each column corresponds to the same initial noise.}
    \label{fig:metric-comparison}
\end{center}
\end{figure}

In this subsection, we expand upon the experiment presented in \Cref{sec:gen-eval}, which compares our proposed metric with practical metrics for evaluating generalization. We compare $\PFDg$ and $\PFDm$ with well-used generative model metrics, inluding $\texttt{FID}, \texttt{FD}_{\texttt{DINOv2}}, \texttt{KID}, \texttt{CMMD}, \texttt{Precision}, \texttt{Recall}, \texttt{NND}, \texttt{IS}$.  We also including the training and testing loss $\ell_{\texttt{train}}, \ell_{\texttt{test}}$ (\Cref{eq:em loss}) as comparison. We evaluating their ability  in capturing the MtoG transition, under the evaluation protocol proposed in \Cref{sec:gen-eval}.

\begin{table}[htbp]
\centering
\caption{Datasets used to evaluate each metric.}
\begin{tabular}{ll}
\toprule
\textbf{Metric} & \textbf{Dataset(s)} \\
\midrule
\makecell[l]{\texttt{FID}, $\texttt{FD}_{\texttt{DINOv2}}$, \texttt{KID}, \texttt{CMMD},\\ \texttt{Precision}, \texttt{Recall}, \texttt{NND}} & $\mathcal{D}_{\texttt{gen}}$ vs. $\mathcal{D}_{\texttt{test}}$ \\
$\texttt{FID}_{\texttt{train}}$, $\texttt{FD}_{\texttt{DINOv2,train}}$ & $\mathcal{D}$ vs. $\mathcal{D}_{\texttt{test}}$ \\
\texttt{IS} & $\mathcal{D}_{\texttt{gen}}$ \\
$\ell_{\texttt{train}}$ & $\mathcal{D}$ \\
$\ell_{\texttt{test}}$ & $\mathcal{D}_{\texttt{test}}$ \\
$\PFDm$, $\PFDg$ & $\{\bm{x}^{(i)}_{\texttt{gen}, T}\}_{i=1}^{M}$ \\
\bottomrule
\end{tabular}
\label{tab:metric_dataset}
\end{table}

We use UNet-10 for the student model in this experiment. We summarized datasets used by these metrics in \Cref{tab:metric_dataset}. Results are shown in \Cref{fig:metric-comparison}, summarized into one sentence, only $\PFDg$ and $\PFDm$ could quantitatively capture this transition. We include detailed discussions below:

\textbf{Results discussions.} \Cref{fig:metric-comparison} (bottom) is consistent with prior empirical observations \cite{zhang2023emergence, yoon2023diffusion}: In the memorization regimes ($N = 2^6$), $p_{\bm \theta}$ tends to memorize the empirical distribution $\pemp$, resulting in similar generation between $\bm \Phi_{\pemp}(\bm{x}_T)$ and $\bm \Phi_{p_{\bm \theta}}(\bm{x}_T)$; in the transition regime ($N = 2^{12}$), the model lacks sufficient capacity to memorize and the sample complexity is inadequate for generalization, leading to poor-quality generations $\bm \Phi_{p_{\bm \theta}}(\bm{x}_T)$; in the generalization regimes ($N = 2^{16}$), $p_{\bm \theta}$ captures the underlying distribution $\pdata$, and the generations $\bm \Phi_{\pdata}(\bm{x}_T)$ and $\bm \Phi_{p_{\bm \theta}}(\bm{x}_T)$ are closely aligned.

As shown in \Cref{fig:metric-comparison} (top), when $N$ increases, $\PFDm$ consistently increases and $\PFDg$ consistently decreases. This aligns with our intuition: as sample complexity grows, models tend to generalize and memorize less. In contrast, all other metrics fail to capture this transition effectively. The reasons can be summarized as follows:
\begin{itemize}[leftmargin=*]
\item \textbf{\texttt{FID}, $\texttt{FD}_{\texttt{DINOv2}}$, $\texttt{KID}$, $\texttt{IS}$, and $\texttt{CMMD}$ are sensitive to generation quality.} Image quality metrics, including \texttt{FID}, $\texttt{FD}_{\texttt{DINOv2}}$, $\texttt{KID}$, $\texttt{IS}$, and $\texttt{CMMD}$, show degradation in performance at $N = 2^{12}$. This drop is primarily due to degraded visual quality in the generated samples, as visualize in \Cref{fig:metric-comparison} (bottom-middle). However, at this sample complexity, the generated data still captures low-level features such as colors and structures from the underlying distribution. This is evident from the visual similarity between $\bm \Phi_{\pdata}(\bm{x}_T)$ and $\bm \Phi_{p_{\bm \theta}}(\bm{x}_T)$, suggesting the model have some generalizability. In comparison, only $\PFDg$ decreases consistently around $N = 2^{12}$, indicating it captures generalizability better than others despite visual degradation.

\item \textbf{\texttt{FID}, $\texttt{FD}_{\texttt{DINOv2}}$ and $\texttt{Recall}$ are sensitive to diversity.} The monotonic trends for \texttt{FID}, $\texttt{FD}_{\texttt{DINOv2}}$ and $\texttt{Recall}$ are due to their sensitivity to the diversity of $\mathcal{D}_{\texttt{gen}}$, rather than their ability to measure generalizability. At small $N$, the model memorizes the training samples, resulting in $\mathcal{D}_{\texttt{gen}}$ closely resembling $\mathcal{D}$ and exhibiting significantly lower diversity than $\mathcal{D}_{\texttt{test}}$, since $N \ll M$. Under these conditions, \texttt{FID}, $\texttt{FD}_{\texttt{DINOv2}}$ are large because they are biased towards the diversity of the evaluation samples (as proved in \cite{chong2020effectively}). Meanwhile, $\texttt{Recall}$ is low because the the support of $\mathcal{D}_{\texttt{test}}$ is limited, reducing the probability that samples drawn from $\mathcal{D}_{\texttt{gen}}$ lie within the support of $\mathcal{D}_{\texttt{test}}$.
In contrast, $\PFDg$ measures generalizability by directly quantifying the distance between the generation from the learned distribution and the underlying distribution and is less affected by the diversity of the generated samples.

\item \textbf{$\texttt{NND}$ and $\ell$ fail to capture the generalizability.} The $\texttt{NND}$, originally designed for assessing the generalization of GANs, is sensitive to image quality and increases during the transition regime. Additionally, it produces identical values across a wide range of sample sizes (e.g., $N = 2^8, 2^9, 2^{13}, 2^{14}, 2^{15}, 2^{16}$), making it unreliable for evaluating generalization in diffusion models. Similarly, neither the training loss $\ell_{\texttt{train}}$ nor the test loss $\ell_{\texttt{test}}$ exhibits a consistent decreasing trend as $N$ increases, indicating that these losses do not directly reflect either memorization or generalization. While the loss gap  $\ell_{\texttt{test}} - \ell_{\texttt{train}}$ does tend to decrease with larger $N$, it cannot serve as a robust generalization metric either. This is because even a randomly initialized model $\bm \theta$ can exhibit a small loss gap.
\end{itemize}

In conclusion, $\PFDm$ and $\PFDg$ are the only metrics that could capture the MtoG transition for diffusion models. They evaluate the generalization (memorization) by directly measuring the distance between the learned distribution by the diffusion model and the underlying (empirical) distribution. Unlike other metrics, they are less affected by the quality or diversity of the evaluating samples.

\subsection{Scaling Behaviors of the MtoG Transition} 
\label{app:scaling_behavior}

In this subsection, we provide detailed experimental settings for \Cref{sec:scaling_behavior}, along with additional experiments to further investigate the MtoG transition across more architectures (e.g., Transformer-based models \cite{bao2023all}). We also investigate the scaling behavior of the MtoG transition under the DINOv2 descriptor.

\begin{figure}[t]
\begin{center}
    \includegraphics[width = \linewidth]{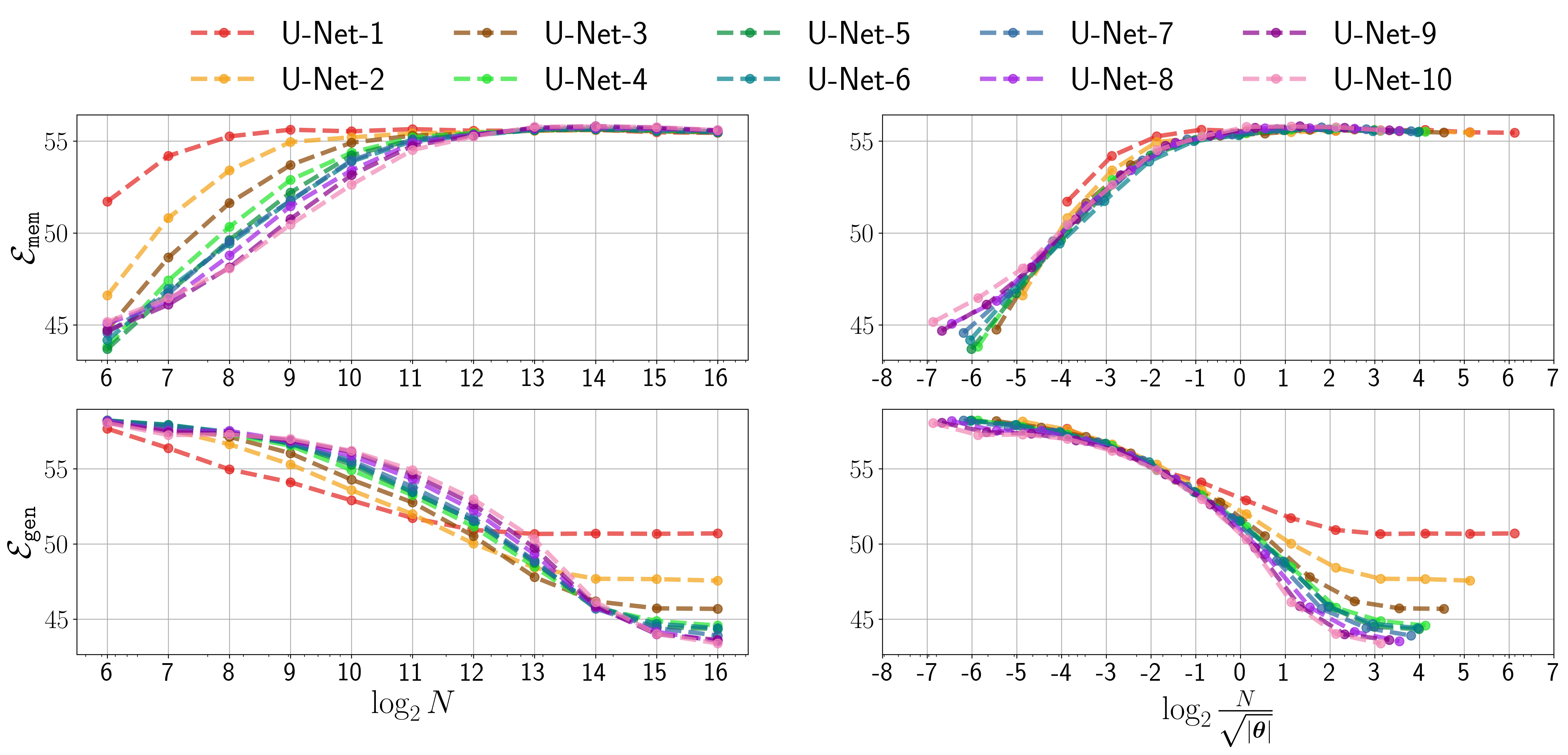}
    \caption{\textbf{Scaling behavior in the MtoG transition under DINOv2 descriptor.} $\PFDm$ and $\PFDg$ plotted against $\log_2(N)$ for a range of U-Net architectures (U-Net-1 to U-Net-10). Right: the same metrics plotted against $\log_2(N/\sqrt{|\bm \theta|})$, where $|\bm \theta|$ is the number of model parameters.}
    \label{fig:mem2gen_dinov2}
\end{center}
\end{figure} 

\noindent \textbf{Experiment settings.} The detailed architectures of the student models, from U-Net-1 to U-Net-10, are provided in \Cref{app:architecture}, with model sizes ranging from 0.9M to 55.7M parameters. We scale up the architectures by increasing the dimensionality of the encoder blocks and the number of residual blocks. For the ImageNet experiments, we adopt the U-Net architectures proposed in \cite{karras2024analyzing}, referred to as U-Net-11 and U-Net-12. These models contain 124.2M and 295.9M parameters, respectively.

\begin{figure}[t]
\begin{center}
    \includegraphics[width = \linewidth]{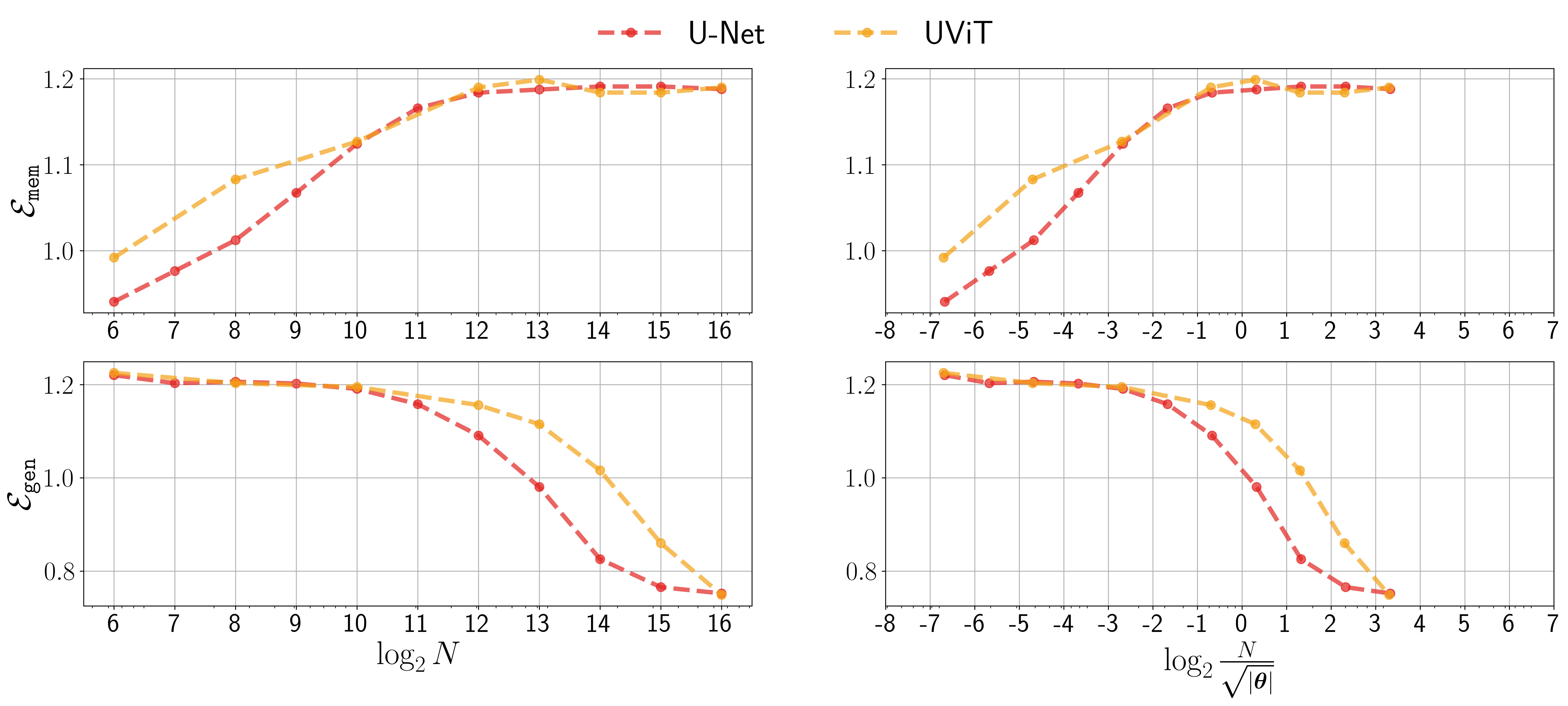}
    \caption{\textbf{Comparison of scaling behavior between UNet and Transformer architectures in the MtoG transition.} $\PFDm$ and $\PFDg$ plotted against $\log_2(N)$ for U-Net architecture (U-Net-9) and UViT architecture. Right: the same metrics plotted against $\log_2(N/\sqrt{|\bm \theta|})$, where $|\bm \theta|$ is the number of model parameters.}
    \label{fig:mem2gen_transformer}
\end{center}
\end{figure}

\noindent \textbf{MtoG transition between U-Net and transformer architecture.} To further investigate the impact of network architecture, we compare the U-Net architecture with the transformer-based UViT \cite{bao2023all}. Specifically, we use the U-Net-9 from \Cref{tab:architecture}, containing 42.7M parameters, and design the UViT model with comparable parameters of 44.2M. Both models are trained for 1000 epochs. Using the same experimental setup described in \Cref{sec:scaling_behavior}, we plot the MtoG transition curves for both U-Net and UViT, as shown in \Cref{fig:mem2gen_transformer}.

As illustrated in \Cref{fig:mem2gen_transformer}, with a similar number of parameters and the same training data sizes, UViT exhibits a higher $\PFDm$ in the memorization regime ($2^6 \leq N \leq 2^{10}$) and a higher $\PFDg$ in the generalization regime ($2^{11} \leq N \leq 2^{15}$), suggesting a lower model capacity compared to U-Net under these conditions. However, when provided with sufficient training data ($N = 2^{16}$), UViT achieves a lower $\PFDg$, demonstrating better generalization performance. This observation is consistent with prior findings on transformer architectures in classification tasks: transformer-based models, lacking the inductive biases inherent to CNNs, tend to generalize poorly when trained on limited data \cite{dosovitskiy2021an}.

\noindent \textbf{Scaling behavior of the MtoG transition under the DINOv2 descriptor.} The scaling behavior under the DINOv2 descriptor is shown in \Cref{fig:mem2gen_dinov2}. Both $\PFDm$ and $\PFDg$ exhibit trends consistent with those observed under the SSCD descriptor (see \Cref{fig:mem2gen}). The only difference is that, under the DINOv2 descriptor, models with varying parameter sizes show greater differentiation in the generalization regime compared to those under the SSCD descriptor. Further discussion on this can be found in the ablation study on image descriptors in \Cref{app:diff_embedding_feature}.

\subsection{Early Learning and Double Descent in Learning Dynamics}
\label{app:training_dynamics}

\begin{figure}[t]
\begin{center}
    \includegraphics[width = \linewidth]{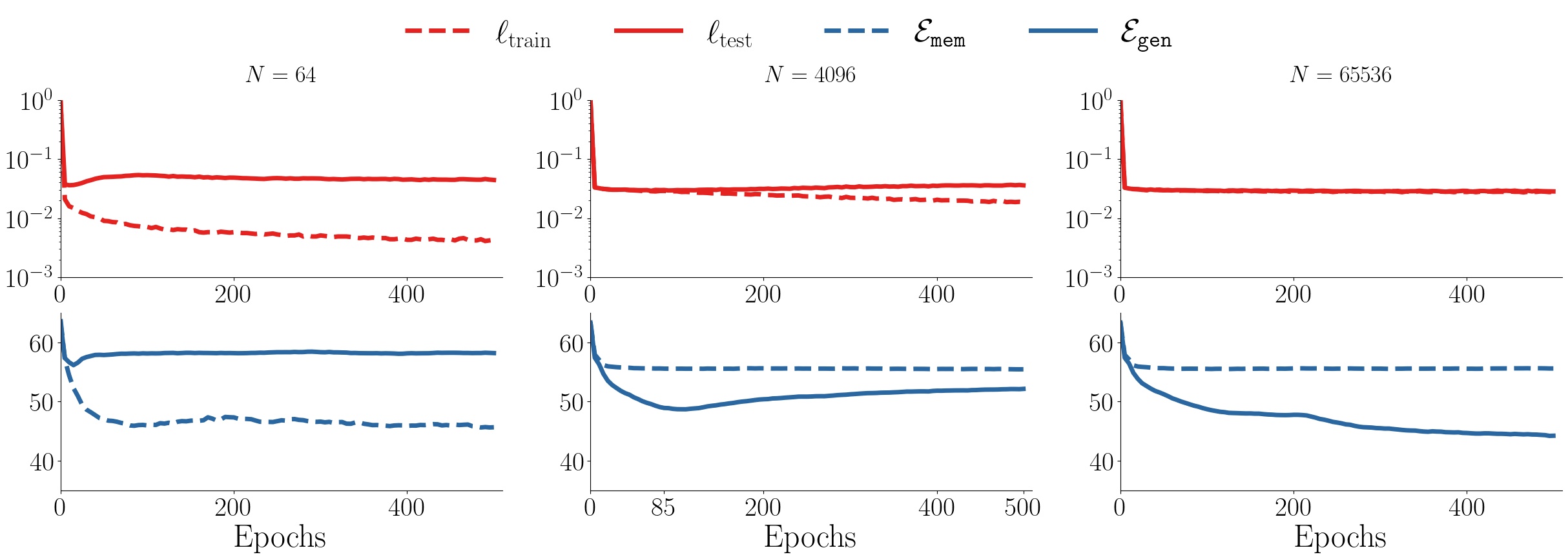}
    \caption{\textbf{Training dynamics of diffusion models under DINOv2 descriptor in different regimes.} The figure plots $\PFDm, \PFDg, \ell_{\texttt{train}}, \ell_{\texttt{test}}$ over training epochs for different different dataset sizes: $N = 2^6$ (left), $2^{12}$ (middle), $2^{16}$ (right). }\label{fig:training_dynamic_dinov2}

\end{center}
\end{figure} 

In this subsection, we build on the discussion from \Cref{sec:training_dynamics}. In \Cref{fig:training_dynamic_sscd}, we evaluate $\ell_{\texttt{train}}$ and $\ell_{\texttt{test}}$ across the three training regimes. Notably, the gap $\ell_{\texttt{test}} - \ell_{\texttt{train}}$ emerges as a practical heuristic for identifying the training regime: In the memorization regime, the gap increases steadily with training; In the transition regime, the gap remains near zero during early training (when generalization improves) and increases for further training (when generalization degrades); in the generalization regime, the gap remains close to zero throughout training. While $\ell_{\texttt{test}} - \ell_{\texttt{train}}$ is not a strict measure of generalization, it proves to be a useful empirical indicator of training regimes for diffusion models. Practically, by setting aside a test dataset to estimate this gap, we can more effectively identify the training regime for diffusion models.

\noindent \textbf{Training dynamics of diffusion models under the DINOv2 descriptor.} The training dynamics under the DINOv2 descriptor are shown in \Cref{fig:training_dynamic_dinov2}. Both $\PFDm$ and $\PFDg$ exhibit trends consistent with those observed under the SSCD descriptor for $N = 64$ and $N = 4096$ (see \Cref{fig:training_dynamic_sscd}). For $N = 65536$, $\PFDg$ still displays a double descent pattern under the DINOv2 descriptor; however, instead of a rise between the two drops, the curve remains relatively flat.

\subsection{Bias-Variance Decomposition of Generalization Error}
\label{app:bias-var}

To approximate $\overline{\bm \Psi \circ \bm \Phi}_{p_{\bm \theta}}(\cdot)$, we independently sample two training datasets, $\mathcal{D}_1$ and $\mathcal{D}_2$, for each specified number of training samples $N$. We then train two student models, $\bm \theta(\mathcal{D}_1)$ and $\bm \theta(\mathcal{D}_2)$, using these datasets. The quantity $\overline{\bm \Psi \circ \bm \Phi}_{p_{\bm \theta}}(\cdot)$ is approximated as follows:

\begin{equation}
    \overline{\bm \Psi \circ \bm \Phi}_{p_{\bm \theta}}  (\cdot) \approx \frac{1}{2}(\bm \Psi \circ \bm \Phi_{p_{\bm \theta\left(\mathcal{D}_1\right)}}(\cdot) + \bm \Psi \circ \bm \Phi_{p_{\bm \theta\left(\mathcal{D}_2\right)}}(\cdot) ).
\end{equation}

\noindent \textbf{Bias-Variance Decomposition of Generalization Error under the DINOv2 Descriptor.} The bias-variance decomposition under the DINOv2 descriptor is shown in \Cref{fig:bias-variance-decomposition-dinov2}. Overall, the results are consistent with those observed under the SSCD descriptor, with two differences: (1) for $N = 65536$, $\PFDg$ does not exhibit a U-shaped curve under the DINOv2 descriptor; and (2) $\PFDb$ for U-Net-1 and U-Net-2 does not decrease monotonically, instead, it first decreases and then increases.

\begin{figure}[t]
\begin{center}
    \begin{subfigure}[t]{0.38\linewidth}
        \includegraphics[width=\linewidth]{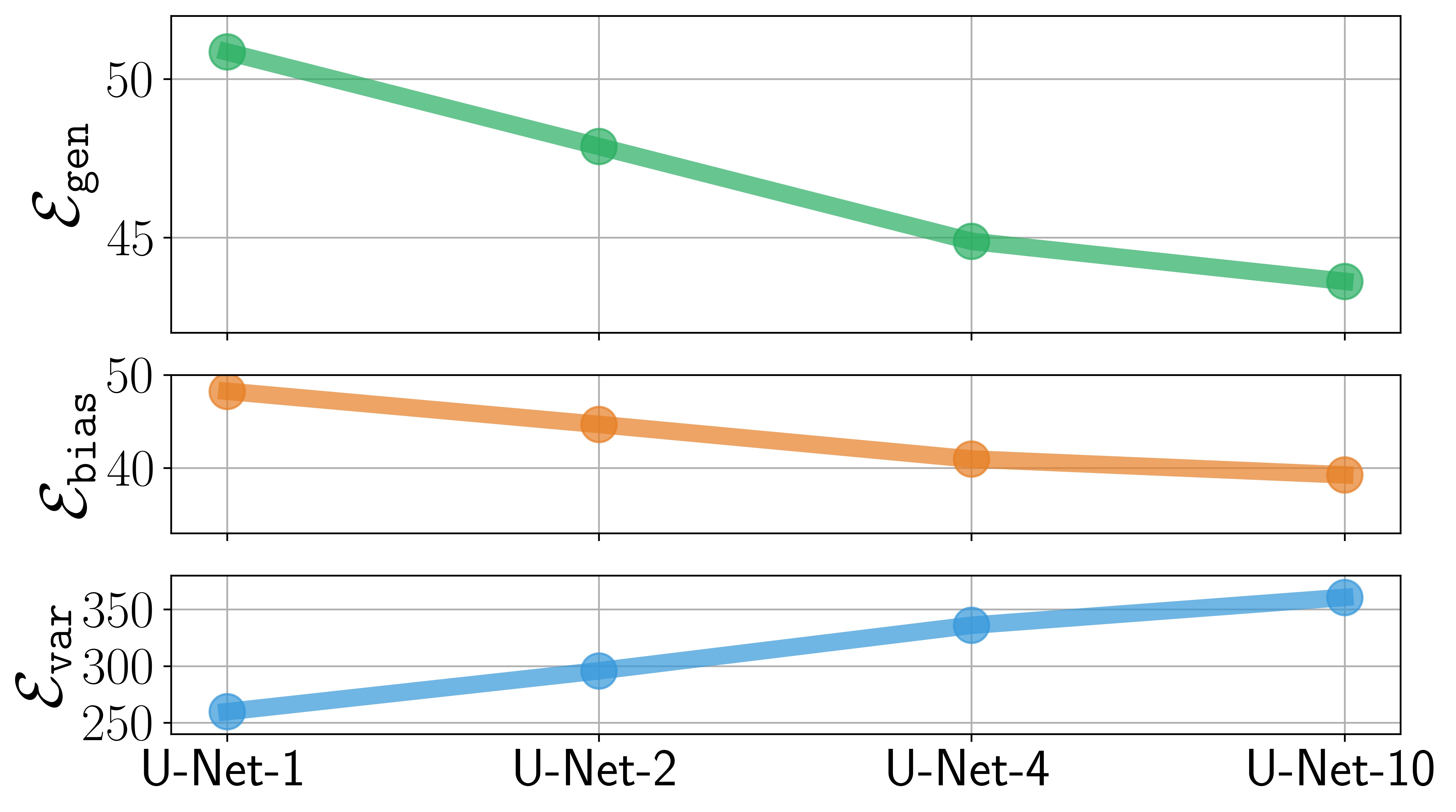}
        \caption{}
    \end{subfigure}
    \begin{subfigure}[t]{0.6\linewidth}
        \includegraphics[width=\linewidth]{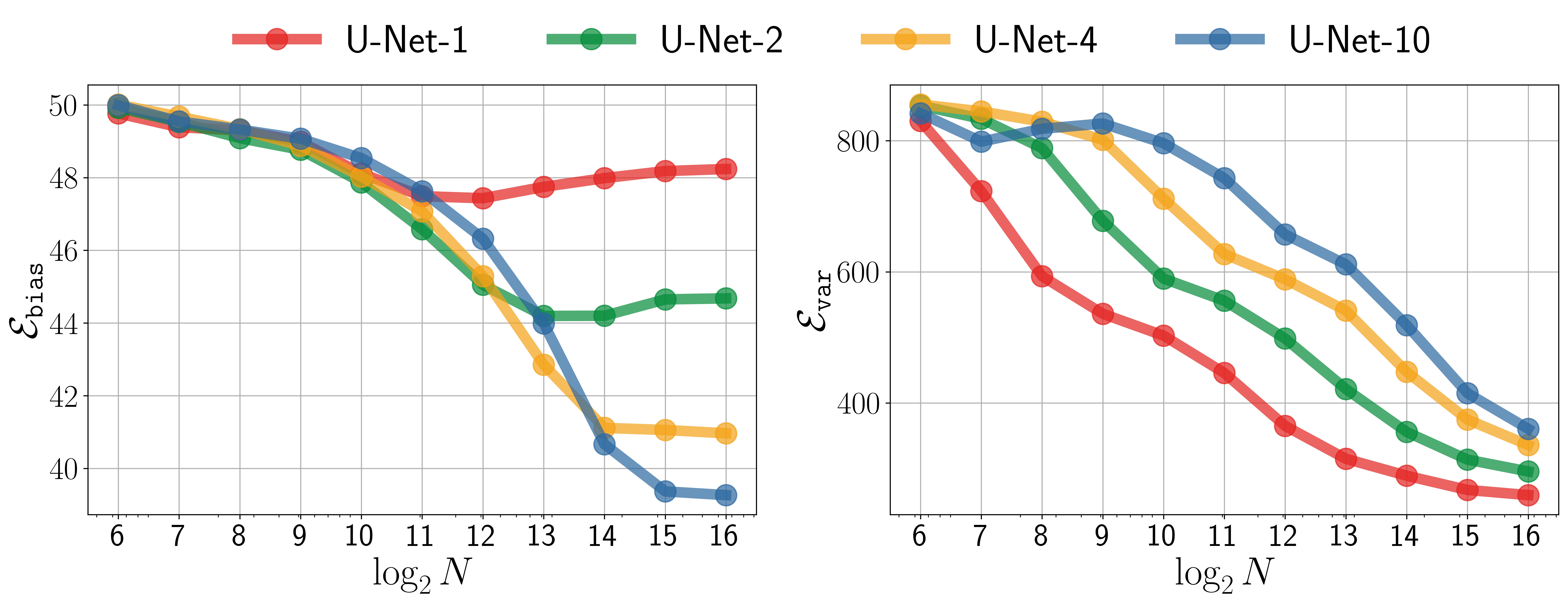}
        \caption{}
    \end{subfigure}
    \caption{\textbf{Bias–Variance Trade-off under DINOv2 descriptor.} (a) plots the generalization error $\PFDg$, bias $\PFDb$, and variance $\PFDv$ across different network architectures with a fixed training sample size of $N = 2^{16}$. (b) shows $\PFDb$ and $\PFDv$ as functions of the number of training samples $N$ for various network architectures.}
    \label{fig:bias-variance-decomposition-dinov2}
\end{center}
\end{figure} 
\section{\texorpdfstring{Further Discussions of $\PFDm$}{Further Discussions of PFDm}} \label{app:estimate_m_score}

In this section, we present the mathematical formulation for estimating $\PFDm$ and compare it with the existing memorization metric.

\noindent \textbf{Empirically estimate $\PFDm$.} As described in \Cref{def:pfd} and \Cref{def:PFD_memorization_generalization}, estimating $\PFDm$ requires access to the mapping $\bm \Phi_{\pemp}(\cdot)$. According to \Cref{eq:n2i_integral}, this mapping is determined by the score function of the empirical distribution, denoted as $\nabla \log \hat{p}_t(\bm x_t)$. Based on prior works \cite{EDM, zhang2023emergence, gu2023memorization}, the score function of the empirical distribution has a closed-form expression:
\begin{align} 
\begin{split}
\nabla \log \hat{p}_t(\bm x_t)
= \frac{1}{T^2}\left(\frac{\E_{\bm x \sim \pemp}[\mathcal{N}(\bm x_t;\bm x, T^2 \bm I_n) \cdot \bm x]}{\E_{\bm x \sim \pemp}[\mathcal{N}(\bm x_t;\bm x, T^2 \bm I_n)]} - \bm x_t\right),
\end{split}
\end{align}   
where $\pemp(\bm x) = \frac{1}{N} \sum_{i=1}^N \delta(\bm x - \bm y^{(i)})$ corresponds to the empirical distribution over the training dataset ${\bm y^{(i)}}_{i = 1}^{N}$. This formulation allows us to numerically compute $\nabla \log \hat{p}_t(\bm x_t)$ for any given $t$. Subsequently, we can use a numerical solver to estimate the integral in \Cref{eq:n2i_integral}, thereby enabling the estimation of $\PFDm$.

\noindent \textbf{Comparison between existing memorization metric and $\PFDm$.}
Previous works \cite{yoon2023diffusion, zhang2023emergence} define memorization metirc as:
\begin{equation}
    \texttt{M Distance}\left(p_{\bm \theta}\right) \coloneqq \mathbb{E}_{\bm x_T} \left[\min\limits_{\bm x \sim \pemp}\left\| \bm \Psi \left(\bm x\right) - \bm \Psi \circ \bm \Phi_{p_{\bm \theta}}  \left(\bm x_T\right) \right\|_2\right],
\end{equation}
A generated sample $\bm \Phi_{p_{\bm \theta}}\left(\bm x_T\right)$ is a memorized sample if it is close enough to one of the sample $\bm x$ from $\pemp$. It is easy to show that $\PFDm$ is a more strict metric than $\texttt{M Distance}$, i.e. "$\PFDm\left(p^{\bm \theta}\right) = 0$" is a sufficient but not necessary condition for "$\texttt{M Distance}\left(p^{\bm \theta}\right) = 0$". We propose $\PFDm$ in order to unify the definition of memorization and generalization.
\section{Ablation Study}\label{app:ablation_study}

In this section, we present ablation studies on the evaluation protocol, examining the effects of different noise schedulers and sampling methods (\Cref{app:diff_sample_algorithm}), image descriptors (\Cref{app:diff_embedding_feature}), sample sizes for evaluation (\Cref{app:diff_sampling_number}), and teacher models (\Cref{app:diff_teacher_model}).

\subsection{Sampling Methods}\label{app:diff_sample_algorithm}

In this subsection, we present ablation studies on various noise schedulers and sampling strategies. Specifically, we evaluate the performance of the following methods: Variance Preserving (VP) \cite{scoreSDE}, Variance Exploding (VE) \cite{scoreSDE}, iDDPM \cite{nichol2021improved} + DDIM \cite{song2021denoising}, and EDM \cite{EDM}. The specific form of $f(t), g(t)$ used in each approach are detailed in Table 1 of \cite{EDM}. Additionally, each method also differs in its choice of ODE solver and timestep discretization strategy. For sampling, we use 256 steps for VP, 1000 for VE, 100 for iDDPM + DDIM, and 18 for EDM. All experiments are conducted under the evaluation protocol described in \Cref{sec:gen-eval}, where we estimate the $\PFDg$ under different training samples $N$. The student models use the UNet-10 architecture. During the ablation study, both the teacher and student models use the same sampling method\footnote{Note that the noise scheduler used for sampling could differ from that used during training.} as specified above.

As shown in \Cref{fig:different_scheduler}, different samplers yield highly consistent results, demonstrating that $\PFD$ can be extended to various noise schedules, i.e., different choices of $f(t)$ and $g(t)$.

\begin{figure}[t]
\begin{center}
    \includegraphics[width = \linewidth]{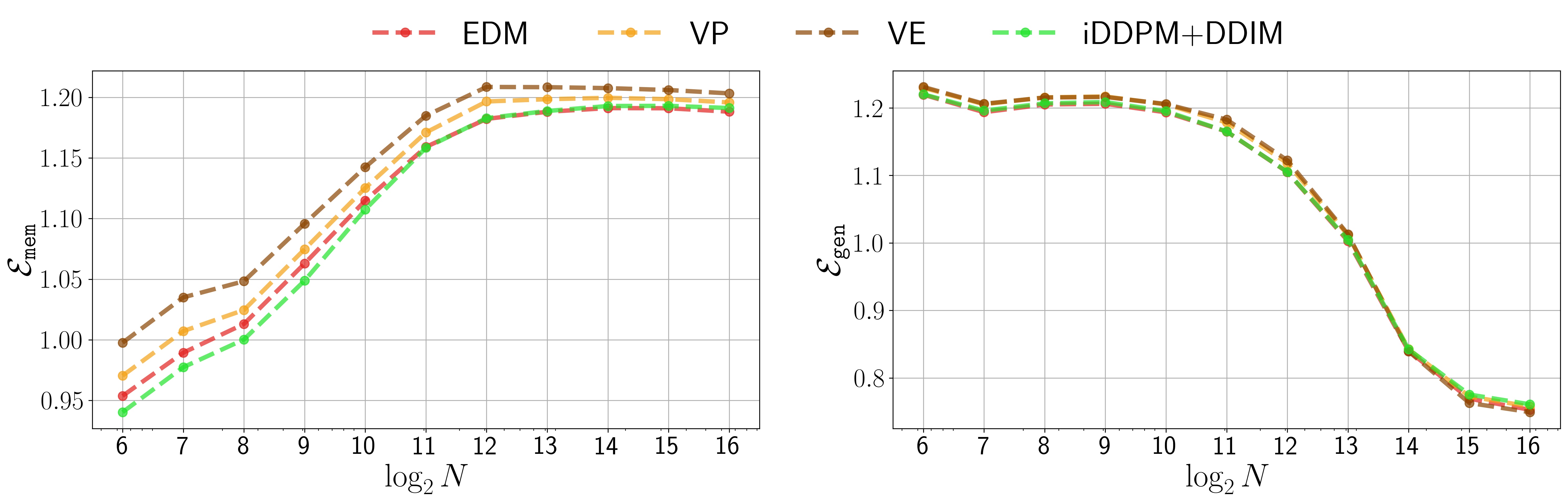}
    \caption{\textbf{Comparison of different sampling methods.} $\PFDm$ and $\PFDg$ plotted against $\log_2(N)$ for different sampling methods, including: EDM, VP, VE, iDDPM+DDIM.}
    \label{fig:different_scheduler}
\end{center}
\end{figure} 

\subsection{Image Descriptors} \label{app:diff_embedding_feature}

In this subsection, we present ablation studies on the image descriptor $\bm{\Psi}$ used in \Cref{eq:pfd}. The descriptors evaluated include DINOv2 \cite{dinov2}, InceptionV3 \cite{inceptionv3}, CLIP \cite{clip}, SSCD \cite{sscd}, and the identity function. All experiments follow the evaluation protocol described in \Cref{sec:gen-eval}, where we estimate both $\PFDm$ and $\PFDg$ across varying training sample sizes $N$ and different student model architectures: U-Net-1, U-Net-2, U-Net-4, and U-Net-10.

As shown in \Cref{fig:different_descriptor}, different feature embeddings reveal a consistent trend in the memorization-to-generalization (MtoG) transition across various U-Net architectures. With limited training samples, smaller models exhibit lower generalization scores. Conversely, with sufficient training data, larger models tend to have lower generalization scores. When comparing with $\PFDg$ measured in pixel space (i.e., using the identity function as the descriptor), we observe that $\PFDg$ values are nearly identical across diffusion architectures when $N \geq 2^{15}$. In this regime, all models have learned low-level image features such as color and structure; however, only the larger models capture high-level perceptual details. Because pixel-space measurements fail to reflect these high-level features, they yield similar $\PFDg$ values regardless of model size. Therefore, it is better to evaluate $\PFDg$ in a feature space, which better captures perceptual differences between models.

Different feature descriptors mainly differ in the generalization regime. Specifically, $\PFDg$ varies the most across architectures when using the DINOv2 descriptor, and the least when using the SSCD descriptor. This is because each descriptor capture different aspects of the image. SSCD focuses on detecting duplicate content and is more sensitive to low-frequency features, while DINOv2 emphasizes perceptual quality and captures high-frequency features. Diffusion models with limited capacity tend to learn low-frequency information first, as it it more easier to learn \cite{wang2025analytical}. As a result, under the SSCD descriptor, different architectures show more similar $\PFDg$ values, since they are all primarily capturing the same low-frequency information in the early training stages.

\begin{figure}[t]
\begin{center}
    \includegraphics[width = \linewidth]{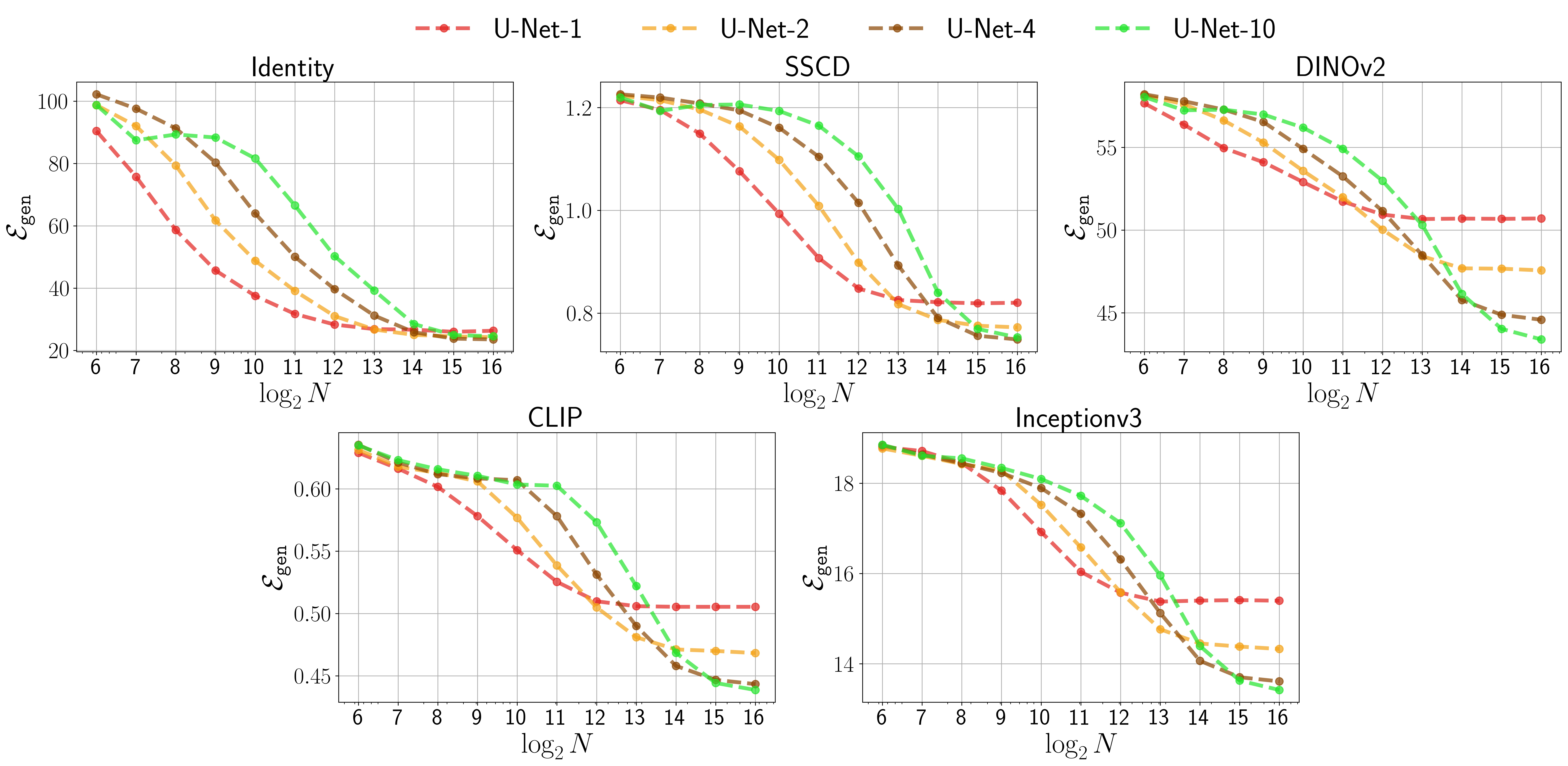}
    \caption{\textbf{Comparison between different image descriptors.} $\PFDg$ plotted against $\log_2(N)$ for a range of U-Net architectures (U-Net-1, U-Net-2, U-Net-4, U-Net-10) using different image descriptors, including identity function, SSCD, DINOv2, CLIP, Inceptionv3.}
    \label{fig:different_descriptor}
\end{center}
\end{figure} 

\begin{figure}[htbp]
\begin{center}
    \includegraphics[width = .6\linewidth]{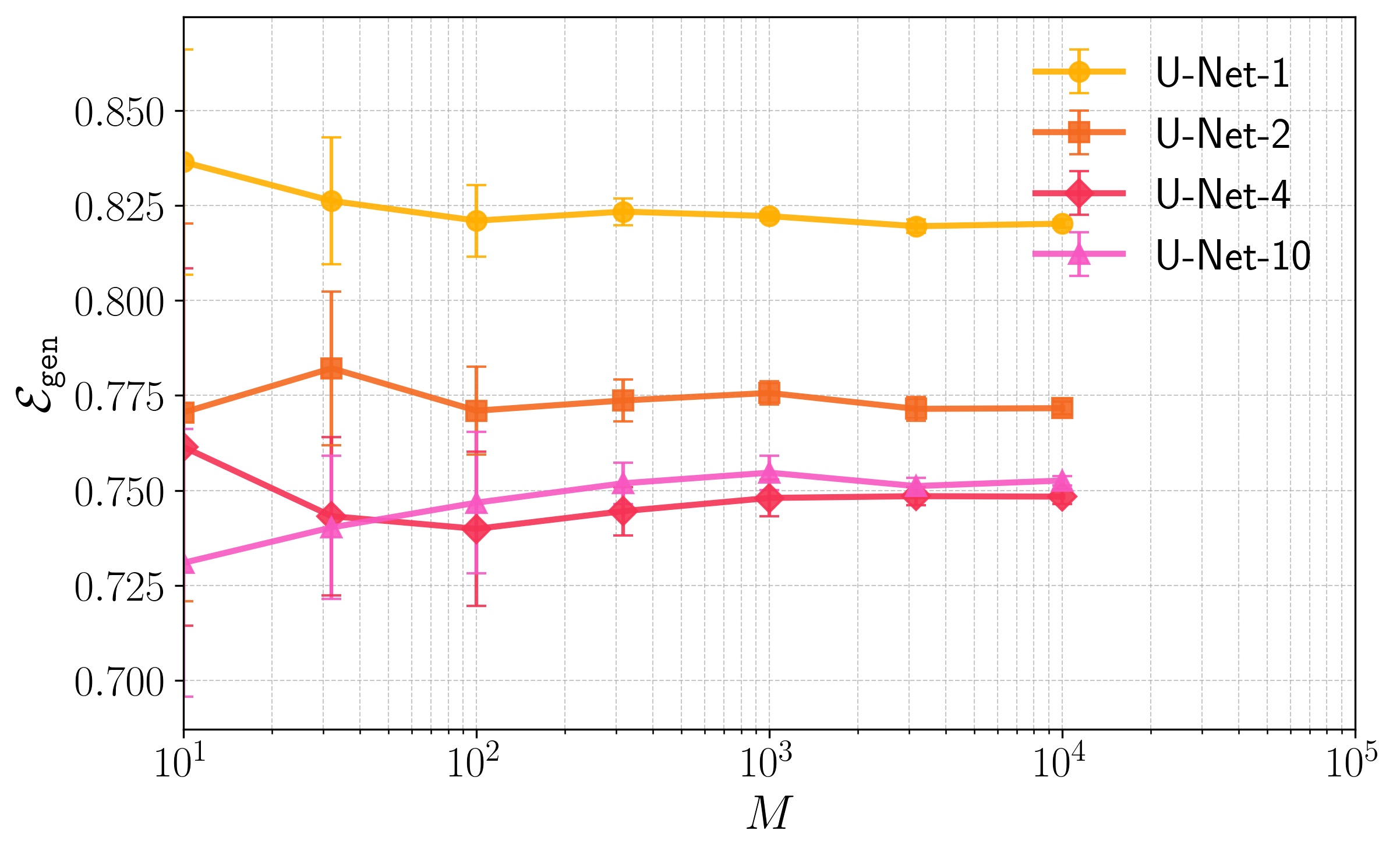}
    \caption{\textbf{Comparison across evaluation sample sizes.} The mean and variance of $\PFDg$ are plotted against the number of evaluation samples $M$ for various U-Net architectures (U-Net-1, U-Net-2, U-Net-4, U-Net-10), with a fixed number of training samples $N = 2^{16}$.}
    \label{fig:different_eval_sample}
\end{center}
\end{figure} 

\subsection{Evaluation Sample Number} \label{app:diff_sampling_number}

In this subsection, we present ablation studies on the number of samples $M$ used by $\hat{\PFD}$ to approximate $\PFD$, as defined in \Cref{eqn:empirical-pfd}. All experiments follow the evaluation protocol described in \Cref{sec:gen-eval}, where we estimate $\PFDg$ across varying training sample sizes $N$ and different student model architectures: U-Net-1, U-Net-2, U-Net-4, and U-Net-10. We vary $M \in \{10, 32, 100, 316, 1000, 3163, 10000\}$, and for each setting, generate 5 independent sets of $\{\bm{x}^{(i)}_{\texttt{gen}, T}\}_{i=1}^{M}$ initial noise estimate $\PFDg$, computing both the mean and variance. 

As shown in \Cref{fig:different_eval_sample}, the variance of $\PFDg$ approaches zero as $M$ increases to 10,000, indicating that when $M \geq 10000$, the empirical estimate of $\PFDg$ converges to its value over the underlying distribution. This result holds consistently across different model architectures.

\subsection{Teacher Model Architecture}\label{app:diff_teacher_model}

\begin{figure}[htbp]
    \centering
    \includegraphics[width=.4\linewidth]{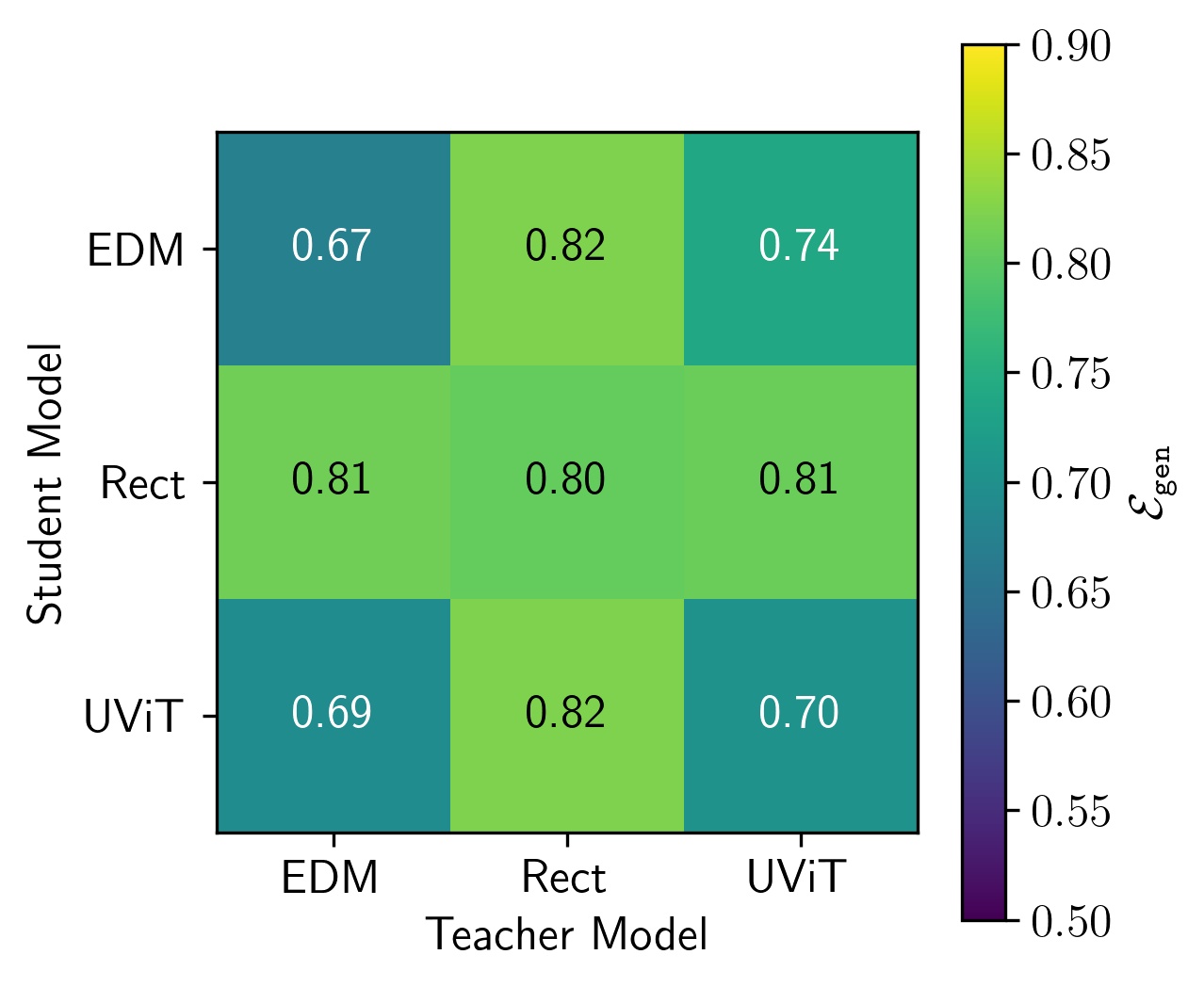}
    \caption{\textbf{Comparison of different teacher models.}The figure shows the $\PFDg$ values for various student models (EDM, Rect, UViT) trained using different teacher models (EDM, Rect, UViT), with a fixed training data size of $N = 2^{16}$.}
     \label{fig:different_teacher_model}
\end{figure}

We end this section by examining how different teacher models affect the evaluation protocol. Specifically, we consider three types of diffusion models: EDM, Rectified Flow (Rect) \cite{RF}, and UViT. Using the CIFAR-10 dataset, we train three teacher models, one for each of these diffusion types. For each teacher model, we then evaluate all three diffusion models as student models. We report their corresponding $\PFDg$ values. Both teacher and student models use the same sampling method, the second-order Heun solver with 18 steps. 

As shown in \Cref{fig:different_teacher_model}, the $\PFDg$ is approximately 0.7 when both the student and teacher models are selected from EDM or UViT. However, $\PFDg$ increases to around 0.8 when either the student or teacher model is Rect. According to its original paper, Rect has the poorest generation quality among the three, as measured by $\mathtt{FID}$. This suggests that the teacher model should possess strong generative performance to serve as an underlying distribution that is close to the real-world data distribution. Therefore, in this paper, we adopt EDM as the teacher model, as it achieves the lowest $\mathtt{FID}$ among the three models.

\end{document}